
\documentclass{article}

\usepackage{microtype}
\usepackage{graphicx}
\usepackage{subfigure}
\usepackage{booktabs} 
\usepackage[noend]{algpseudocode}
\usepackage{amsthm,amssymb,amsmath}
\usepackage{amsmath}
\usepackage{glossaries}
\usepackage{listings}

\usepackage{hyperref}

\newtheorem{theorem}{Theorem}
\newtheorem{corollary}{Corollary}
\newtheorem{lemma}{Lemma}

\DeclareMathOperator*{\argmin}{argmin}


\usepackage[accepted]{icml2019}

\icmltitlerunning{Deep Counterfactual Regret Minimization}

\begin{document}

\twocolumn[
\icmltitle{Deep Counterfactual Regret Minimization}



\icmlsetsymbol{equal}{*}

\begin{icmlauthorlist}
\icmlauthor{Noam Brown}{equal,cmu,fair}
\icmlauthor{Adam Lerer}{equal,fair}
\icmlauthor{Sam Gross}{fair}
\icmlauthor{Tuomas Sandholm}{cmu}
\end{icmlauthorlist}

\icmlaffiliation{fair}{Facebook AI Research}
\icmlaffiliation{cmu}{Computer Science Department, Carnegie Mellon University}

\icmlcorrespondingauthor{Noam Brown}{noamb@cs.cmu.edu}

\icmlkeywords{CFR, Counterfactual Regret Minimization, Deep Reinforcement Learning, Poker, Regret Minimization, No-Regret Learning, Deep Learning}

\vskip 0.3in
]



\printAffiliationsAndNotice{\icmlEqualContribution} 

\begin{abstract}
\vspace{-0.03in}
\emph{Counterfactual Regret Minimization (CFR)} is the leading framework for solving large imperfect-information games. 
It converges to an equilibrium by iteratively traversing the game tree.
In order to deal with extremely large games, abstraction is typically applied before running CFR. The abstracted game is solved with tabular CFR, and its solution is mapped back to the full game. This process can be problematic because aspects of abstraction are often manual and domain specific, abstraction algorithms may miss important strategic nuances of the game, and there is a chicken-and-egg problem because determining a good abstraction requires knowledge of the equilibrium of the game.
This paper introduces \emph{Deep Counterfactual Regret Minimization}, a form of CFR that obviates the need for abstraction by instead using deep neural networks to approximate the behavior of CFR in the full game. We show that Deep CFR is principled and achieves strong performance in large poker games. This is the first non-tabular variant of CFR to be successful in large games.
\end{abstract}

\section{Introduction}
\vspace{-0.03in}
\label{sec:intro}

Imperfect-information games model strategic interactions between multiple agents with only partial information. They are widely applicable to real-world domains such as negotiations, auctions, and cybersecurity interactions. Typically in such games, one wishes to find an approximate equilibrium in which no player can improve by deviating from the equilibrium.


The most successful family of algorithms for imperfect-information games have been variants of \emph{Counterfactual Regret Minimization (CFR)}~\cite{Zinkevich07:Regret}. CFR is an iterative algorithm that converges to a Nash equilibrium in two-player zero-sum games. Forms of tabular CFR have been used in all recent milestones in the benchmark domain of poker~\cite{Bowling15:Heads-up,Moravcik17:DeepStack,Brown17:Superhuman} and have been used in all competitive agents in the Annual Computer Poker Competition going back at least six years.\footnote{\href{www.computerpokercompetition.org}{www.computerpokercompetition.org}} In order to deal with extremely large imperfect-information games, \emph{abstraction} is typically used to simplify a game by bucketing similar states together and treating them identically. The simplified (abstracted) game is approximately solved via tabular CFR. However, constructing an effective abstraction requires extensive domain knowledge and the abstract solution may only be a coarse approximation of a true equilibrium.

In constrast, reinforcement learning has been successfully extended to large state spaces by using function approximation with deep neural networks rather than a tabular representation of the policy (deep RL). This approach has led to a number of recent breakthroughs in constructing strategies in large MDPs~\cite{Mnih15:Human} as well as in zero-sum perfect-information games such as Go~ \cite{Silver17:Mastering,Silver18:General}.\footnote{Deep RL has also been applied successfully to some partially observed games such as Doom~\cite{lample2017playing}, as long as the hidden information is not too strategically important.} Importantly, deep RL can learn good strategies with relatively little domain knowledge for the specific game~\cite{Silver17:Mastering}.
However, most popular RL algorithms do not converge to good policies (equilibria) in imperfect-information games in theory or in practice.

Rather than use tabular CFR with abstraction, this paper introduces a form of CFR, which we refer to as \emph{Deep Counterfactual Regret Minimization}, that uses function approximation with deep neural networks to approximate the behavior of tabular CFR on the full, unabstracted game. We prove that Deep CFR converges to an $\epsilon$-Nash equilibrium in two-player zero-sum games and empirically evaluate performance in poker variants, including heads-up limit Texas hold'em. We show Deep CFR outperforms Neural Fictitious Self Play (NFSP)~\cite{Heinrich16:Deep}, which was the prior leading function approximation algorithm for imperfect-information games, and that Deep CFR is competitive with domain-specific tabular abstraction techniques.


\section{Notation and Background}
\vspace{-0.03in}
\label{sec:background}
In an imperfect-information extensive-form (that is, tree-form) game there is a finite set of players, $\mathcal{P}$. A \emph{node} (or history) $h$ is defined by all information of the current situation, including private knowledge known to only one player. $A(h)$ denotes the actions available at a node and $P(h)$ is either chance or the unique player who acts at that node. If action $a \in A(h)$ leads from $h$ to $h'$, then we write $h \cdot a = h'$. We write $h \sqsubset h'$ if a sequence of actions leads from $h$ to $h'$. $H$ is the set of all nodes.
$Z \subseteq H$ are terminal nodes for which no actions are available. For each player $p \in \mathcal{P}$, there is a payoff function $u_p: Z\rightarrow \mathbb{R}$. 
In this paper we assume $\mathcal{P} = \{1, 2\}$ and $u_1 = -u_2$ (the game is two-player zero-sum).
We denote the range of payoffs in the game by $\Delta$.

Imperfect information is represented by {\em information sets} (infosets) for each player $p \in \mathcal{P}$. For any infoset $I$ belonging to $p$, all nodes $h, h' \in I$ are indistinguishable to $p$. Moreover, every non-terminal node $h \in H$ belongs to exactly one infoset for each $p$. We represent the set of all infosets belonging to $p$ where $p$ acts by $\mathcal{I}_p$. We call the set of all terminal nodes with a prefix in $I$ as $Z_I$, and we call the particular prefix $z[I]$. We assume the game features \emph{perfect recall}, which means if $h$ and $h'$ do not share a player $p$ infoset then all nodes following $h$ do not share a player $p$ infoset with any node following $h'$.

A strategy (or policy) $\sigma(I)$ is a probability vector over actions for acting player~$p$ in infoset $I$. Since all states in an infoset belonging to $p$ are indistinguishable, the strategies in each of them must be identical. The set of actions in $I$ is denoted by $A(I)$. The probability of a particular action $a$ is denoted by $\sigma(I,a)$.
We define $\sigma_p$ to be a strategy for $p$ in every infoset in the game where $p$ acts.
A strategy profile $\sigma$ is a tuple of strategies, one for each player. The strategy of every player other than $p$ is represented as $\sigma_{-p}$. $u_p(\sigma_p, \sigma_{-p})$ is the expected payoff for $p$ if player $p$ plays according to $\sigma_p$ and the other players play according to $\sigma_{-p}$.

$\pi^{\sigma}(h) = \Pi_{h' \cdot a \sqsubseteq h} \sigma_{P(h')}(h',a)$ is called {\em reach} and is the probability $h$ is reached if all players play according to $\sigma$. $\pi^{\sigma}_p(h)$ is the contribution of $p$ to this probability.
$\pi^{\sigma}_{-p}(h)$ is the contribution of chance and all players other than $p$.
For an infoset $I$ belonging to $p$, the probability of reaching $I$ if $p$ chooses actions leading toward $I$ but chance and all players other than $p$ play according to $\sigma_{-p}$ is denoted by $\pi^{\sigma}_{-p}(I) = \sum_{h \in I}\pi^{\sigma}_{-p}(h)$.
For $h \sqsubseteq z$, define $\pi^\sigma(h \to z) = \Pi_{h' \cdot a \sqsubseteq z,h'\not\sqsubset h} \sigma_{P(h')}(h',a)$ 

A {\em best response} to $\sigma_{-p}$ is a player~$p$ strategy $BR(\sigma_{-p})$ such that $u_p\big(BR(\sigma_{-p}), \sigma_{-p}\big) = \max_{\sigma'_p} u_p(\sigma'_p, \sigma_{-p})$.
A {\em Nash equilibrium} $\sigma^*$ is a strategy profile where everyone plays a best response: $\forall p$, $u_p(\sigma^*_p, \sigma^*_{-p}) = \max_{\sigma'_p} u_p(\sigma'_p, \sigma^*_{-p})$~\cite{Nash50:Eq}.
The {\em exploitability} $e(\sigma_p)$ of a strategy $\sigma_p$ in a two-player zero-sum game is how much worse $\sigma_p$ does versus $BR(\sigma_p)$ compared to how a Nash equilibrium strategy $\sigma^*_p$ does against $BR(\sigma^*_p)$. Formally, $e(\sigma_p) = u_p\big(\sigma_p^*,BR(\sigma_p^*)\big) - u_p\big(\sigma_p,BR(\sigma_p)\big)$. We measure \emph{total exploitability} $\sum_{p \in P} e(\sigma_p)$\footnote{Some prior papers instead measure \emph{average} exploitability rather than \emph{total} (summed) exploitability.}.

\subsection{Counterfactual Regret Minimization (CFR)}
\vspace{-0.03in}
CFR is an iterative algorithm that converges to a Nash equilibrium in any finite two-player zero-sum game with a theoretical convergence bound of $O(\frac{1}{\sqrt{T}})$. In practice CFR converges much faster. We provide an overview of CFR below; for a full treatment, see~\citet{Zinkevich07:Regret}. Some recent forms of CFR converge in $O(\frac{1}{T^{0.75}})$ in self-play settings~\cite{Farina19:Stable}, but are slower in practice so we do not use them in this paper.


Let $\sigma^t$ be the strategy profile on iteration $t$.  The \emph{counterfactual value} $v^{\sigma}(I)$ of player $p=P(I)$ at $I$ is the expected payoff to $p$ when reaching $I$, weighted by the probability that $p$ would reached $I$ if she tried to do so that iteration. Formally,
\begin{equation}
\label{def:v}
v^\sigma(I) = \sum_{z\in Z_I} \pi_{-p}^\sigma(z[I])\pi^\sigma(z[I] \to z) u_p(z)
\end{equation}
and $v^\sigma(I,a)$ is the same except it assumes that player~$p$ plays action~$a$ at infoset~$I$ with 100\% probability.

The {\em instantaneous regret} $r^t(I,a)$ is the difference between $P(I)$'s counterfactual value from playing $a$ vs. playing $\sigma$ on iteration $t$
\begin{equation}
r^t(I,a) = v^{\sigma^t}(I,a) - v^{\sigma^t}(I)
\label{eq:instant}
\end{equation}
The {\em counterfactual regret} for infoset $I$ action $a$ on iteration $T$ is
\begin{equation}
R^T(I,a) = \sum_{t = 1}^T r^t(I,a)
\label{eq:regret}
\end{equation}
Additionally, $R^T_+(I,a) = \max\{R^T(I,a), 0 \}$ and $R^T(I) = \max_a\{R^T(I,a)\}$. \emph{Total regret}
for $p$ in the entire game is
$R_p^T = \max_{\sigma_p'} \sum_{t = 1}^T \big(u_p(\sigma'_p, \sigma_{-p}^t) - u_p(\sigma^t_p, \sigma_{-p}^t)\big)$.

CFR determines an iteration's strategy by applying any of several \emph{regret minimization} algorithms to each infoset~\cite{Littlestone94:Weighted,Chaudhuri09:Parameter-free}.  Typically, {\em regret matching} (RM) is used as the regret minimization algorithm within CFR due to RM's simplicity and lack of parameters~\cite{Hart00:Simple}.

In RM, a player picks a distribution over actions in an infoset in proportion to the positive regret on those actions. Formally, on each iteration $t+1$, $p$ selects actions $a \in A(I)$ according to probabilities
\begin{equation}
\sigma^{t+1}(I,a) =
\frac{R^t_+(I,a)}{\sum_{a' \in A(I)}R_+^t(I,a')}
\label{eq:rm}
\end{equation}
If $\sum_{a' \in A(I)}R_+^t(I,a') = 0$ then any arbitrary strategy may be chosen. Typically each action is assigned equal probability, but in this paper we choose the action with highest counterfactual regret with probability $1$, which we find empirically helps RM better cope with approximation error (see Figure \ref{fig:ablations}).

If a player plays according to regret matching in infoset $I$ on every iteration, then on iteration $T$, $R^T(I) \le \Delta\sqrt{|A(I)|}\sqrt{T}$~\cite{Cesa-Bianchi06:Prediction}.
\citet{Zinkevich07:Regret} show that the sum of the counterfactual regret across all infosets upper bounds the total regret. Therefore,
if player~$p$ plays according to CFR on every iteration, then
$R_p^T \le \sum_{I \in \mathcal{I}_p} R^T(I)$.
So, as $T \rightarrow \infty$, $\frac{R_p^T}{T} \rightarrow 0$.

The average strategy $\bar{\sigma}_p^T(I)$ for an infoset $I$ on iteration $T$ is
$\bar{\sigma}_p^T(I) = \frac{\sum_{t = 1}^T \big(\pi_p^{\sigma^t}(I)\sigma_p^t(I)\big)}{\sum_{t = 1}^T \pi_p^{\sigma^t}(I)}$.

In two-player zero-sum games, if both players' average total regret satisfies $\frac{R_p^T}{T} \le \epsilon$, then their average strategies $\langle \bar{\sigma}^T_1, \bar{\sigma}^T_2 \rangle$ form a $2\epsilon$-Nash equilibrium~\cite{Waugh09:Thesis}. Thus, CFR constitutes an anytime algorithm for finding an $\epsilon$-Nash equilibrium in two-player zero-sum games. 

In practice, faster convergence is achieved by alternating which player updates their regrets on each iteration rather than updating the regrets of both players simultaneously each iteration, though this complicates the theory~\cite{Farina18:Online,Burch18:Revisiting}. We use the alternating-updates form of CFR in this paper.

\subsection{Monte Carlo Counterfactual Regret Minimization}
\vspace{-0.03in}
Vanilla CFR requires full traversals of the game tree, which is infeasible in large games. One method to combat this is Monte Carlo CFR (MCCFR), in which only a portion of the game tree is traversed on each iteration~\cite{Lanctot09:Monte}.  In MCCFR, a subset of nodes $Q^t$ in the game tree is traversed at each iteration, where $Q^t$ is sampled from some distribution $\mathcal{Q}$. 
Sampled regrets $\tilde{r}^t$ are tracked rather than exact regrets. For infosets that are sampled at iteration $t$, $\tilde{r}^t(I, a)$ is equal to $r^t(I,a)$ divided by the probability of having sampled $I$; for unsampled infosets $\tilde{r}^t(I,a)=0$.
See Appendix \ref{app:proof} for more details.

There exist a number of MCCFR variants~\cite{Gibson12:Generalized,Johanson12:Efficient,Jackson17:Targeted}, but for this paper we focus specifically on the \emph{external sampling} variant due to its simplicity and strong performance. In external-sampling MCCFR the game tree is traversed for one player at a time, alternating back and forth. We refer to the player who is traversing the game tree on the iteration as the \emph{traverser}. Regrets are updated only for the traverser on an iteration. At infosets where the traverser acts, all actions are explored. At other infosets and chance nodes, only a single action is explored. 

External-sampling MCCFR probabilistically converges to an equilibrium. For any $\rho \in (0,1]$, total regret is bounded by $R_p^T \le \big(1 + \frac{\sqrt{2}}{\sqrt{\rho}}\big) |\mathcal{I}_p| \Delta \sqrt{|A|}\sqrt{T}$ with probability $1-\rho$.


\section{Related Work}
\vspace{-0.03in}
CFR is not the only iterative algorithm capable of solving large imperfect-information games. First-order methods converge to a Nash equilibrium in $O(1/T)$~\cite{Hoda10:Smoothing,Kroer18:Faster,Kroer18:Solving}, which is far better than CFR's theoretical bound. However, in practice the fastest variants of CFR are substantially faster than the best first-order methods. Moreover, CFR is more robust to error and therefore likely to do better when combined with function approximation. 

Neural Fictitious Self Play (NFSP)~\cite{Heinrich16:Deep} previously combined deep learning function approximation with Fictitious Play~\cite{Brown51:Iterative} to produce an AI for heads-up limit Texas hold'em, a large imperfect-information game. However, Fictitious Play has weaker theoretical convergence guarantees than CFR, and in practice converges slower. We compare our algorithm to NFSP in this paper. Model-free policy gradient algorithms have been shown to minimize regret when parameters are tuned appropriately~\cite{Srinivasan18:Actor} and achieve performance comparable to NFSP.

Past work has investigated using deep learning to estimate values at the depth limit of a subgame in imperfect-information games~\cite{Moravcik17:DeepStack,Brown18:Depth}. However, tabular CFR was used within the subgames themselves. Large-scale function approximated CFR has also been developed for single-agent settings~\cite{Jin17:Regret}. Our algorithm is intended for the multi-agent setting and is very different from the one proposed for the single-agent setting.

Prior work has combined regression tree function approximation with CFR~\cite{Waugh15:Solving} in an algorithm called \emph{Regression CFR (RCFR)}. This algorithm defines a number of features of the infosets in a game and calculates weights to approximate the regrets that a tabular CFR implementation would produce. Regression CFR is algorithmically similar to Deep CFR, but uses hand-crafted features similar to those used in abstraction, rather than learning the features. RCFR also uses full traversals of the game tree (which is infeasible in large games) and has only been evaluated on toy games. It is therefore best viewed as the first proof of concept that function approximation can be applied to CFR.

Concurrent work has also investigated a similar combination of deep learning with CFR, in an algorithm referred to as Double Neural CFR~\cite{Li18:Double}.  However, that approach may not be theoretically sound and the authors consider only small games. There are important differences between our approaches in how training data is collected and how the behavior of CFR is approximated.

\section{Description of the Deep Counterfactual Regret Minimization Algorithm}
\vspace{-0.03in}
In this section we describe Deep CFR. The goal of Deep CFR is to approximate the behavior of CFR without calculating and accumulating regrets at each infoset, by generalizing across similar infosets using function approximation via deep neural networks.


On each iteration $t$, Deep CFR conducts a constant number $K$ of partial traversals of the game tree, with the path of the traversal determined according to external sampling MCCFR. At each infoset $I$ it encounters, it plays a strategy $\sigma^t(I)$ determined by regret matching on the output of a neural network $V: I \to \mathbf{R}^{|A|}$ defined by parameters $\theta^{t-1}_p$ that takes as input the infoset $I$ and outputs values $V(I,a|\theta^{t-1})$. Our goal is for $V(I,a|\theta^{t-1})$ to be approximately proportional to the regret $R^{t-1}(I,a)$ that tabular CFR would have produced.

When a terminal node is reached, the value is passed back up. In chance and opponent infosets, the value of the sampled action is passed back up unaltered. In traverser infosets, the value passed back up is the weighted average of all action values, where action $a$'s weight is $\sigma^t(I,a)$. This produces samples of this iteration's instantaneous regrets for various actions. Samples are added to a memory $\mathcal{M}_{v,p}$, where $p$ is the traverser, using reservoir sampling \citep{vitter1985random} if capacity is exceeded.

Consider a nice property of the sampled instantaneous regrets induced by external sampling:
\begin{lemma}
\label{lemma:value_lemma}
For external sampling MCCFR, the sampled instantaneous regrets are an unbiased estimator of the \textbf{advantage}, i.e. the difference in expected payoff for playing $a$ vs $\sigma_p^t(I)$ at $I$, assuming both players play $\sigma^t$ everywhere else. $$\mathbb{E}_{Q\in \mathcal{Q}_t} \left[ \tilde{r}_p^{\sigma^t}(I,a) \middle| Z_I \cap Q \neq \emptyset \right] = \frac{v^{\sigma^t}(I,a) - v^{\sigma^t}(I)}{\pi_{-p}^{\sigma^t}(I)}.$$
\end{lemma}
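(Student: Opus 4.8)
The plan is to unwind the sampled values produced by a single external-sampling traversal, take the sampling expectation in two stages (first over the subtree below $I$, then over which node of $I$ is reached), and match the result against the definition of $v^{\sigma^t}$ in Eq.~\eqref{def:v}. First I would set up the sampled payoffs: write $\tilde{u}(h)$ for the value the algorithm passes up from node $h$, equal to $u_p(h)$ at terminals, to the weighted average $\sum_a \sigma^t(h,a)\tilde{u}(h\cdot a)$ at traverser nodes, and to $\tilde{u}(h\cdot a^*)$ at chance/opponent nodes, where $a^*$ is the single action sampled there (according to chance or $\sigma^t_{-p}$). Because a chance/opponent action is sampled with exactly the probability with which it would be weighted at a decision node, an easy induction up the tree gives $\mathbb{E}[\tilde{u}(h)] = \bar{u}(h) := \sum_{z:\,h\sqsubseteq z}\pi^{\sigma^t}(h\to z)\,u_p(z)$, the true expected payoff from $h$ under $\sigma^t$. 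The induction relies on the fact that the actions sampled strictly below $h$ are fresh and independent of the samples that led the traversal to $h$.

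Next I would pin down the conditioning. A single traversal branches on all of the traverser's actions but commits to one sampled action at every chance/opponent node, so by perfect recall the opponent/chance path meets at most one node of the traverser infoset $I$. Hence $\{Z_I\cap Q\neq\emptyset\}$ is exactly the event ``$I$ is reached''; a particular $h\in I$ is reached with probability $\pi^{\sigma^t}_{-p}(h)$ (the traverser contributes factor $1$ because it explores every action); and these events are disjoint across $h\in I$. Consequently $\Pr[I\text{ reached}]=\sum_{h\in I}\pi^{\sigma^t}_{-p}(h)=\pi^{\sigma^t}_{-p}(I)$, and conditioned on reaching $I$ the reached node is $h$ with probability $\pi^{\sigma^t}_{-p}(h)/\pi^{\sigma^t}_{-p}(I)$.

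Then I would assemble the conditional expectation and expand the definition. At the reached node $h\in I$ the sampled instantaneous regret is $\tilde{r}(I,a) = \tilde{u}(h\cdot a) - \sum_{a'}\sigma^t(I,a')\tilde{u}(h\cdot a')$. Taking expectation over the independent subtree below $h$ replaces each $\tilde u$ by $\bar u$, and averaging over which $h$ is reached yields
\begin{equation*}
\mathbb{E}\big[\tilde{r}(I,a)\,\big|\,Z_I\cap Q\neq\emptyset\big] = \sum_{h\in I}\frac{\pi^{\sigma^t}_{-p}(h)}{\pi^{\sigma^t}_{-p}(I)}\Big(\bar{u}(h\cdot a) - \sum_{a'}\sigma^t(I,a')\,\bar{u}(h\cdot a')\Big).
\end{equation*}
Grouping the terminals in Eq.~\eqref{def:v} by their prefix $h=z[I]$ gives $v^{\sigma^t}(I) = \sum_{h\in I}\pi^{\sigma^t}_{-p}(h)\sum_{a'}\sigma^t(I,a')\bar{u}(h\cdot a')$ and likewise $v^{\sigma^t}(I,a) = \sum_{h\in I}\pi^{\sigma^t}_{-p}(h)\,\bar{u}(h\cdot a)$; subtracting these and dividing by $\pi^{\sigma^t}_{-p}(I)$ reproduces the displayed right-hand side exactly, completing the argument.

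I expect the main obstacle to be the bookkeeping in the conditioning step rather than any deep idea: one must argue carefully, using perfect recall, that a traversal reaching $I$ reaches a \emph{unique} $h\in I$ with the claimed probability $\pi^{\sigma^t}_{-p}(h)$, and must keep straight that the traverser's own reach probability drops out (it explores every action) while the $1/\pi^{\sigma^t}_{-p}(I)$ factor on the right-hand side is precisely the normalization coming from conditioning on $I$ being sampled. By contrast, the payoff-expectation induction and the regrouping of Eq.~\eqref{def:v} are routine once the sampling model is fixed.
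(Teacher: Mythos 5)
Your proof is correct, but it takes a genuinely different route from the paper's. The paper works entirely inside the general MCCFR block formalism: it writes the sampled value $\tilde{v}_p^{\sigma^t}(I)$ with its importance weight $1/q(z)$, expresses the conditional expectation as the unconditional expectation divided by $P_{Q_j \sim \mathcal{Q}}(Z_I \cap Q_j \neq \emptyset) = \pi_{-p}^{\sigma^t}(I)$, and then swaps the order of summation over blocks $Q_j$ and terminals $z$ so that the factor $\sum_{j : z \in Q_j} q_j = q(z)$ cancels the importance weight, leaving $v^{\sigma^t}(I)/\pi_{-p}^{\sigma^t}(I)$; the statement for $\tilde{r}$ then follows by linearity (the same computation applied to $\tilde{v}^{\sigma^t}_p(I,a)$). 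You instead argue from the recursive values that the external-sampling traversal actually passes up (as in Algorithm 2): an induction up the tree gives $\mathbb{E}[\tilde{u}(h)] = \bar{u}(h)$, and the conditioning is handled combinatorially by identifying which node of $I$ the traversal reaches. Each approach buys something. The paper's is shorter and is really a statement about any MCCFR scheme, with external sampling entering only through the value of $P(Z_I \cap Q_j \neq \emptyset)$; it leans on the importance-weighting machinery already set up for the regret bounds. Yours is more elementary and closer to the implementation, and it makes explicit two facts the paper uses silently: that by perfect recall the events ``$h$ reached'' for distinct $h \in I$ are disjoint, so that $P(Z_I \cap Q \neq \emptyset) = \sum_{h \in I}\pi_{-p}^{\sigma^t}(h) = \pi_{-p}^{\sigma^t}(I)$, and that the traverser's own reach probability contributes factor $1$. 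The only loose end on your side is definitional: the paper defines the sampled regrets via the importance-weighted formula $\tilde{v}^{\sigma^t}_p(I|j) = \sum_{z \in Q_j \cap Z_I} u_p(z)\,\pi_p^{\sigma^t}(z[I] \to z)$, so a fully rigorous version of your argument should note (one line, by unrolling your recursion) that your $\tilde{u}$ at the reached node $h \in I$ equals exactly this quantity, after which the two proofs are proving the same statement.
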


The proof is provided in Appendix \ref{app:value_lemma}. 

Recent work in deep reinforcement learning has shown that neural networks can effectively predict and generalize advantages in challenging environments with large state spaces, and use that to learn good policies \cite{mnih2016asynchronous}.

Once a player's $K$ traversals are completed, a new network is trained \emph{from scratch} to determine parameters $\theta_p^t$ by minimizing MSE between predicted advantage $V_p(I,a|\theta^t)$ and samples of instantaneous regrets from prior iterations $t' \le t$ $\tilde{r}^{t'}(I,a)$ drawn from the memory.
The average over all sampled instantaneous advantages $\tilde{r}^{t'}(I,a)$ is proportional to the total sampled regret $\tilde{R}^t(I,a)$ (across actions in an infoset), so once a sample is added to the memory it is never removed except through reservoir sampling, even when the next CFR iteration begins. 

One can use any loss function for the value and average strategy model that satisfies Bregman divergence~\cite{banerjee2005clustering}, such as mean squared error loss.

While almost any sampling scheme is acceptable so long as the samples are weighed properly, external sampling has the convenient property that it achieves both of our desired goals by assigning all samples in an iteration equal weight. Additionally, exploring all of a traverser's actions helps reduce variance. However, external sampling may be impractical in games with extremely large branching factors, so a different sampling scheme, such as outcome sampling~\cite{Lanctot09:Monte}, may be desired in those cases.

In addition to the value network, a separate policy network $\Pi: I \to \mathbf{R}^{|A|}$ approximates the average strategy at the end of the run, because it is the \emph{average strategy played over all iterations} that converges to a Nash equilibrium. To do this, we maintain a separate memory $\mathcal{M}_{\Pi}$ of sampled infoset probability vectors for both players. Whenever an infoset $I$ belonging to player $p$ is traversed during the opposing player's traversal of the game tree via external sampling, the infoset probability vector $\sigma^t(I)$ is added to $\mathcal{M}_{\Pi}$ and assigned weight $t$.

If the number of Deep CFR iterations and the size of each value network model is small, then one can avoid training the final policy network by instead storing each iteration's value network~\cite{Steinberger19:Single}. During actual play, a value network is sampled randomly and the player plays the CFR strategy resulting from the predicted advantages of that network. This eliminates the function approximation error of the final average policy network, but requires storing all prior value networks. Nevertheless, strong performance and low exploitability may still be achieved by storing only a subset of the prior value networks~\cite{Jackson16:Compact}.

Theorem~\ref{th:deepcfr_approx} states that if the memory buffer is sufficiently large, then with high probability Deep CFR will result in average regret being bounded by a constant proportional to the square root of the function approximation error.

\newcommand{\norm}[1]{\left\lVert#1\right\rVert}

\begin{theorem}
\label{th:deepcfr_approx}

Let $T$ denote the number of Deep CFR iterations, $|A|$ the maximum number of actions at any infoset, and $K$ the number of traversals per iteration. Let $\mathcal{L}^t_V$ be the average MSE loss for $V_p(I,a|\theta^t)$ on a sample in $\mathcal{M}_{V,p}$ at iteration $t$ , and let $\mathcal{L}^t_{V^*}$ be the minimum loss achievable for any function $V$. Let $\mathcal{L}^t_V - \mathcal{L}^t_{V^*} \leq \epsilon_\mathcal{L}$.

If the value memories are sufficiently large, then with probability $1-\rho $ total regret at time $T$ is bounded by

\begin{equation}
    R_p^T \leq \left( 1 + \frac{\sqrt{2}}{\sqrt{\rho K}} \right) \Delta |\mathcal{I}_p| \sqrt{|A|}\sqrt{T} +  4T |\mathcal{I}_p| \sqrt{|A| \Delta  \epsilon_\mathcal{L}}
\end{equation}
with probability $1-\rho$.

\end{theorem}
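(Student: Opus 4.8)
The plan is to reduce the statement to a per-infoset analysis and to separate the error into a Monte-Carlo sampling part (the first term) and a function-approximation part (the second term). I would begin from Zinkevich's decomposition already quoted in the excerpt, that the total regret is bounded by the sum of counterfactual regrets, $R_p^T \le \sum_{I\in\mathcal{I}_p} R^T(I)$, so it suffices to bound $R^T(I)$ at each infoset and then multiply by $|\mathcal{I}_p|$. The essential complication is that at each $I$ Deep CFR plays regret matching not on the true cumulative regret $R^{t-1}(I,\cdot)$ but on the network estimate $V(I,\cdot\,|\theta^{t-1})$.

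The core of the argument is the Blackwell/regret-matching potential recursion applied to the squared norm of the positive cumulative regret, $\Phi^t_I = \norm{R^t_+(I)}^2$. In exact regret matching the cross term $\langle R^{t-1}_+(I), r^t(I)\rangle$ vanishes because the strategy is proportional to $R^{t-1}_+(I)$ and $\sum_a \sigma^t(I,a)\, r^t(I,a)=0$. Here the strategy is proportional to $V(I,\cdot\,|\theta^{t-1})_+$, so only $\langle V(I,\cdot\,|\theta^{t-1})_+, r^t(I)\rangle$ vanishes, and the residual cross term equals $\langle R^{t-1}_+(I) - V(I,\cdot\,|\theta^{t-1})_+,\, r^t(I)\rangle$, which Cauchy--Schwarz bounds by the product of the regret-estimation error $\varepsilon^{t-1}(I) := R^{t-1}_+(I) - V(I,\cdot\,|\theta^{t-1})_+$ and $\norm{r^t(I)} \le \Delta\sqrt{|A|}$. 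Unrolling the recursion and taking a square root via $\sqrt{x+y}\le\sqrt{x}+\sqrt{y}$ bounds $R^T(I)$ by the exact regret-matching bound $\Delta\sqrt{|A|}\sqrt{T}$ plus a term of order $\sqrt{\Delta\sqrt{|A|}\sum_t \norm{\varepsilon^t(I)}}$; this is what becomes the second, approximation-dependent term.

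To control the estimation error I would use that squared (Bregman) loss has Bayes-optimal predictor equal to the conditional mean of its targets, which by Lemma~\ref{lemma:value_lemma} is exactly the true advantage. Hence the excess loss $\epsilon_\mathcal{L} = \mathcal{L}^t_V - \mathcal{L}^t_{V^*}$ equals the mean-squared deviation of $V$ from the true advantage, averaged over the memory's sampling distribution. Since an infoset appears in that distribution with weight proportional to its reach $\pi^{\sigma^t}_{-p}$, and the counterfactual regret carries the same $\pi_{-p}$ weighting, I would convert the single scalar $\epsilon_\mathcal{L}$ into a per-infoset $\ell_2$ error in the accumulated regret, sum over iterations and over the $|\mathcal{I}_p|$ infosets, and collect constants to reach the $4T|\mathcal{I}_p|\sqrt{|A|\Delta\epsilon_\mathcal{L}}$ term.

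Finally, the first term arises because the memory stores sampled rather than exact regrets. Assuming the memory is large enough that reservoir sampling faithfully represents the true sample distribution, I would invoke the external-sampling MCCFR high-probability regret bound quoted in the excerpt, upgraded for the $K$ independent traversals per iteration: averaging $K$ unbiased regret samples cuts the sampling variance by a factor $K$, replacing $\rho$ by $\rho K$ in the concentration term and yielding $\left(1 + \frac{\sqrt{2}}{\sqrt{\rho K}}\right)\Delta|\mathcal{I}_p|\sqrt{|A|}\sqrt{T}$ on the good event of probability $1-\rho$. The main obstacle is the middle step: propagating a single aggregate MSE quantity $\epsilon_\mathcal{L}$, averaged over all infosets, actions, and iterations, into the per-infoset, per-iteration $\ell_2$ errors the potential recursion consumes, while correctly tracking the reach-probability weighting and the fact that the network predicts an \emph{average} advantage whereas regret matching consumes a \emph{cumulative} regret. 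Reconciling these so that the final dependence is exactly $\sqrt{\epsilon_\mathcal{L}}$, with the stated $T$ and $|\mathcal{I}_p|$ factors, is the delicate part; the concentration step is essentially inherited from existing MCCFR analysis.
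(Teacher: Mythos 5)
Your skeleton matches the paper's in outline---a Blackwell-style potential recursion whose cross term is bounded by the distance between the network output and the regrets (this is exactly the content of the \citealt{Morrill16:Using} bound the paper invokes), combined with the Bayes-optimality of the conditional mean under squared loss to tie $\epsilon_\mathcal{L}$ to prediction error, and Lanctot's external-sampling concentration. But the two points you defer as ``the delicate part'' are genuine gaps, not details. The first is the cumulative-versus-average mismatch: the network predicts (something proportional to) a \emph{per-visit average} regret, while the recursion consumes a \emph{cumulative} regret that grows with $t$. The paper resolves this with a specific idea you never supply: regret matching is invariant to positive rescaling of its input at each infoset, so the potential bound holds with $C(I)\,y^t(I,a)$ in place of $y^t(I,a)$ for any $C(I)>0$; choosing $C(I)=\sum_{t'\le t} x^{t'}(I)$ (the number of visits to $I$) turns the comparison into ``average stored regret vs.\ network output,'' and the average stored regret is exactly the conditional mean that minimizes the memory MSE, which is what makes $\sum_{I}\Pi^t(I)\epsilon^t(I)\le|\mathcal{I}_p|\,\epsilon_\mathcal{L}$ and hence the $\sqrt{\epsilon_\mathcal{L}}$ dependence come out. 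Without this rescaling device the excess loss cannot be propagated into the recursion at all.

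The second gap is that you run the recursion on the \emph{true} regrets $R^t(I)$, whereas $\epsilon_\mathcal{L}$ only controls the distance of $V$ from the \emph{sampled} (stored) targets. The cross term $\langle R^{t-1}_+(I)-V_+(I,\cdot),\,r^t(I)\rangle$ you propose to bound via Cauchy--Schwarz is therefore not controlled by $\epsilon_\mathcal{L}$; bridging it would require an additional per-infoset, per-iteration concentration argument relating true and sampled cumulative regrets, which you never provide. The paper avoids this entirely by running the recursion on the sampled regrets $\tilde{R}^t(I,a)$: there the cross term vanishes on unvisited iterations (since $\tilde{r}^t(I)=0$), which is also what produces the visit-frequency weighting $\Pi^t(I)$ that exactly matches the weighting in the memory loss. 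Only at the end is the resulting bound on $\max_a(\tilde{R}^T(I,a))^2$ fed into the machinery of \citealt{lanctot2013monte}, Theorem 3, and that single concentration step produces \emph{both} terms of the final bound, including the $\sqrt{\rho K}$ factor---not just the first term, as in your proposed split of ``first term from MCCFR concentration, second term from a recursion on true regrets.'' As written, your decomposition would not assemble into the stated inequality.
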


\begin{corollary}
\label{cor:deepcfr_approx}
As $T\to \infty$, average regret $\frac{R^T_p}{T}$ is bounded by $$ 4 |\mathcal{I}_p| \sqrt{|A| \Delta \epsilon_\mathcal{L}}$$ with high probability.
\end{corollary}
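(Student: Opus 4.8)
The plan is to reduce the claim to a per-infoset analysis of regret matching performed on \emph{approximate} regrets, and to control the approximation error through the excess MSE loss $\epsilon_\mathcal{L}$ together with the concentration behavior of external sampling. Corollary~\ref{cor:deepcfr_approx} then follows immediately from Theorem~\ref{th:deepcfr_approx}: dividing the bound by $T$ and letting $T\to\infty$, the first term is $O(\sqrt T)$ and vanishes, leaving the coefficient $4|\mathcal{I}_p|\sqrt{|A|\Delta\epsilon_\mathcal{L}}$ of the second term, so I focus on the theorem. The first step is to apply the CFR regret decomposition to the \emph{actual} sequence of strategies $\sigma^1,\dots,\sigma^T$ that Deep CFR plays, giving $R_p^T \le \sum_{I\in\mathcal{I}_p} R^T(I)$. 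This holds regardless of how the strategies were generated, so it lets me bound each infoset's counterfactual regret independently and then sum.

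Next I would establish a \emph{regret matching under approximation error} bound. Deep CFR plays $\sigma^{t+1}(I)=\mathrm{RM}\big(V(I,\cdot\,|\,\theta^{t})\big)$ rather than regret matching on the true cumulative regret $R^{t}(I,\cdot)$. Viewing regret matching through Blackwell approachability, perturbing the cumulative regret vector by an error $\epsilon^t_I$ shifts the approachability target by a controlled amount, so the realized counterfactual regret is bounded by the exact regret-matching bound $\Delta\sqrt{|A|}\sqrt T$ plus an additive penalty accumulating the per-iteration perturbations. I would then split $V(\theta^t)-R^t$ into (i) Monte Carlo error between the empirical sampled cumulative regret $\tilde R^t$ and the true $R^t$, and (ii) function-approximation error between $V(\theta^t)$ and $\tilde R^t$. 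For (i), Lemma~\ref{lemma:value_lemma} gives unbiasedness of the sampled instantaneous regrets, so the external-sampling concentration bound applies; averaging $K$ independent traversals per iteration shrinks the variance by $K$, turning the $\frac{\sqrt2}{\sqrt\rho}$ of single-sample MCCFR into $\frac{\sqrt2}{\sqrt{\rho K}}$ and producing the first term with probability $1-\rho$.

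For the function-approximation error (ii) I would use a bias--variance decomposition: by Lemma~\ref{lemma:value_lemma} the minimum achievable loss $\mathcal{L}^t_{V^*}$ is exactly the irreducible sampling variance, so $\epsilon_\mathcal{L}=\mathcal{L}^t_V-\mathcal{L}^t_{V^*}$ equals the expected squared deviation of $V$ from its target mean (the true advantage). The ``sufficiently large memory'' hypothesis guarantees, with high probability, that the reservoir's empirical distribution approximates the true sampling distribution, so the per-sample MSE controls the reach-weighted squared error across infosets. Converting this $\ell_2$, distribution-weighted error into an additive regret penalty, Cauchy--Schwarz over the $|A|$ actions contributes a $\sqrt{|A|}$ factor and the reach weighting $\pi_{-p}$ together with the payoff range contributes the $\Delta$ factor, giving a per-iteration penalty of order $\sqrt{|A|\Delta\epsilon_\mathcal{L}}$. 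Summing over the $T$ iterations and the $|\mathcal{I}_p|$ infosets produces the second term $4T|\mathcal{I}_p|\sqrt{|A|\Delta\epsilon_\mathcal{L}}$, and a union bound over the concentration events yields the full statement with probability $1-\rho$.

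The hard part will be the regret-matching-under-error bound together with the $\ell_2\to$regret conversion: regret is a signed-sum ($\ell_\infty$) object, whereas training only controls a distribution-weighted $\ell_2$ error, so bridging the two and obtaining a square-root rather than linear dependence on $\epsilon_\mathcal{L}$ is delicate. A further subtlety is the feedback loop --- the approximation error at iteration $t$ influences which infosets get sampled, hence the memory contents and targets at later iterations, breaking independence --- so I expect most of the technical effort to go into showing that the ``sufficiently large memory'' assumption neutralizes this distribution shift with high probability and keeps the per-iteration penalty from growing with $t$.
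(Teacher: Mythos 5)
Your actual proof of the corollary is just the opening sentence---divide the bound of Theorem~\ref{th:deepcfr_approx} by $T$ and let $T\to\infty$---and that is indeed the same reduction the paper uses; everything that follows is a sketch of Theorem~\ref{th:deepcfr_approx} itself, which you may take as given here. But the one-sentence reduction, as you state it, skips the only non-trivial point, and it is precisely the point the paper's proof of this corollary exists to handle. The first term of the theorem's bound is not a deterministic $O(\sqrt{T})$: it carries the factor $\left(1+\sqrt{2}/\sqrt{\rho K}\right)$, and the whole inequality holds only with probability $1-\rho$ for a free parameter $\rho$. Fixing $\rho$ and letting $T\to\infty$ only yields, for every $\epsilon>0$,
$\limsup_{T\to\infty} P\left(R_p^T/T > 4|\mathcal{I}_p|\sqrt{|A|\Delta\epsilon_\mathcal{L}}+\epsilon\right) \le \rho$,
which is not yet the asymptotic ``with high probability'' claim (violation probability tending to $0$). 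You need one further step: either (a) note that $\rho>0$ is arbitrary, so the $\limsup$ above must equal $0$; or (b) as the paper does, couple $\rho$ to $T$, taking $\rho=T^{-1/4}$, and verify that the first term still vanishes after dividing by $T$ (the $\rho^{-1/2}$ blow-up is only $T^{1/8}$, so the term is $O(T^{-3/8})$) while the failure probability $T^{-1/4}\to 0$. Either fix is short, but without it the stated conclusion does not follow from your limit argument, since nothing in it accounts for the interplay between $\rho$ and $T$.

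On the rest of the proposal: your sketch of Theorem~\ref{th:deepcfr_approx} tracks the paper's appendix proof quite closely---regret matching on perturbed regret vectors (the paper invokes a known perturbed-regret-matching bound rather than re-deriving a Blackwell argument), the split into Monte Carlo concentration and function-approximation error, the bias--variance lemmas identifying the minimal loss $\mathcal{L}^t_{V^*}$ with the irreducible sampling variance so that the excess loss $\epsilon_\mathcal{L}$ controls the squared distance between $V(\cdot\,|\,\theta^t)$ and the conditional mean of the sampled regrets, and a Jensen-type step to aggregate over infosets---so it is consistent with the paper but is not an alternative route, and none of it is needed to establish the corollary once the theorem is granted.
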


The proofs are provided in Appendix \ref{app:proof_approx}.


We do not provide a convergence bound for Deep CFR when using linear weighting, since the convergence rate of Linear CFR has not been shown in the Monte Carlo case. However, Figure~$\ref{fig:ablations}$ shows moderately faster convergence in practice.

\begin{algorithm*}[!ht]
\caption{Deep Counterfactual Regret Minimization}
\begin{algorithmic}
\Function{DeepCFR}{}
\State Initialize each player's advantage network $V(I,a|\theta_p)$ with parameters $\theta_p$ so that it returns 0 for all inputs.
\State Initialize reservoir-sampled advantage memories $\mathcal{M}_{V,1}, \mathcal{M}_{V,2}$ and strategy memory $\mathcal{M}_{\Pi}$.
\For{CFR iteration $t=1$ to $T$}
\For{{\bf each} player $p$}
    \For{traversal $k=1$ to $K$}
        \State \Call{Traverse}{$\emptyset, p, \theta_1, \theta_2, \mathcal{M}_{V,p}, \mathcal{M}_{\Pi}$} \Comment{Collect data from a game traversal with external sampling}
    \EndFor
  \State Train $\theta_p$ from scratch on loss $\mathcal{L}(\theta_p)=\mathbb{E}_{(I, t', \tilde{r}^{t'}) \sim \mathcal{M}_{V,p}}\left[ t' \sum_a \left( \tilde{r}^{t'}(a) - V(I, a|\theta_p)\right)^2\right]$
\EndFor
\EndFor
  \State Train $\theta_\Pi$ on loss $\mathcal{L}(\theta_\Pi)=\mathbb{E}_{(I,t',\sigma^{t'}) \sim \mathcal{M}_{\Pi}}\left[ t' \sum_a \left(\sigma^{t'}(a) - \Pi(I, a|\theta_\Pi)\right)^2\right]$ 
\State \Return $\theta_\Pi$
\EndFunction
\end{algorithmic}
\label{alg:deepcfr}
\end{algorithm*}

\begin{algorithm*}[!ht]
\caption{CFR Traversal with External Sampling}
\begin{algorithmic}
\Function{Traverse}{$h, p, \theta_1, \theta_2, \mathcal{M}_{V}, \mathcal{M}_{\Pi}$, t}
\State {\it Input:} History $h$, traverser player $p$, regret network parameters $\theta$ for each player, advantage memory $\mathcal{M}_{V}$ for player $p$, strategy memory $\mathcal{M}_{\Pi}$, CFR iteration $t$.
\\
\If{$h$ is terminal}
\State \Return the payoff to player $p$
\ElsIf{$h$ is a chance node}
    \State $a \sim \sigma(h)$
    \State \Return \Call{Traverse}{$h \cdot a, p, \theta_1, \theta_2, \mathcal{M}_{V}, \mathcal{M}_{\Pi}$, t}
\ElsIf{$P(h) = p$} \Comment{If it's the traverser's turn to act}
    \State Compute strategy $\sigma^t(I)$ from predicted advantages $V(I(h),a|\theta_p)$ using regret matching. 
    \For{$a \in A(h)$}
        \State $v(a) \gets$ \Call{Traverse}{$h \cdot a,p,\theta_1,\theta_2,\mathcal{M}_{V},\mathcal{M}_{\Pi}$, t} \Comment{Traverse each action}
	\EndFor
	\For{$a \in A(h)$}
	    \State $\tilde{r}(I,a) \gets v(a) - \sum_{a'\in A(h)}{\sigma(I,a')\cdot v(a')}$ \Comment{Compute advantages}
    \EndFor
    \State Insert the infoset and its action advantages $(I, t, \tilde{r}^t(I))$ into the advantage memory $\mathcal{M}_{V}$
\Else \Comment{If it's the opponent's turn to act}
    \State Compute strategy $\sigma^t(I)$ from predicted advantages $V(I(h), a|\theta_{3-p})$ using regret matching.
    \State Insert the infoset and its action probabilities $(I, t, \sigma^t(I))$ into the strategy memory $\mathcal{M}_{\Pi}$
    \State Sample an action $a$ from the probability distribution $\sigma^t(I)$.
    \State \Return \Call{Traverse}{$h \cdot a, p, \theta_1, \theta_2, \mathcal{M}_{V}, \mathcal{M}_{\Pi}$, t}
\EndIf
\EndFunction

\end{algorithmic}
\end{algorithm*}

\section{Experimental Setup}
\vspace{-0.03in}
We measure the performance of Deep CFR (Algorithm $\ref{alg:deepcfr}$) in approximating an equilibrium in heads-up flop hold'em poker (FHP). FHP is a large game with over $10^{12}$ nodes and over $10^9$ infosets. In contrast, the network we use has 98,948 parameters. FHP is similar to heads-up limit Texas hold'em (HULH) poker, but ends after the second betting round rather than the fourth, with only three community cards ever dealt. We also measure performance relative to domain-specific abstraction techniques in the benchmark domain of HULH poker, which has over $10^{17}$ nodes and over $10^{14}$ infosets. The rules for FHP and HULH are given in Appendix~\ref{sec:rules}.

In both games, we compare performance to NFSP, which is the previous leading algorithm for imperfect-information game solving using domain-independent function approximation, as well as state-of-the-art abstraction techniques designed for the domain of poker~\cite{Johanson13:Evaluating,Ganzfried14:Potential-Aware,Brown15:Hierarchical}.
\subsection{Network Architecture}
\vspace{-0.03in}
We use the neural network architecture shown in Figure \ref{fig:nn_arch} for both the value network $V$ that computes advantages for each player and the network $\Pi$ that approximates the final average strategy. This network has a depth of 7 layers and 98,948 parameters. Infosets consist of sets of cards and bet history. The cards are represented as the sum of three embeddings: a rank embedding (1-13), a suit embedding (1-4), and a card embedding (1-52). These embeddings are summed for each set of permutation invariant cards (hole, flop, turn, river), and these are concatenated. In each of the $N_\textit{rounds}$ rounds of betting there can be at most $6$ sequential actions, leading to $6 N_\textit{rounds}$ total unique betting positions. Each betting position is encoded by a binary value specifying whether a bet has occurred, and a float value specifying the bet size.

The neural network model begins with separate branches for the cards and bets, with three and two layers respectively. Features from the two branches are combined and three additional fully connected layers are applied. Each fully-connected layer consists of $x_{i+1} = \textrm{ReLU}( A x [ + x ] )$. The optional skip connection $[+ x]$ is applied only on layers that have equal input and output dimension. Normalization (to zero mean and unit variance) is applied to the last-layer features. The network architecture was not highly tuned, but normalization and skip connections were used because they were found to be important to encourage fast convergence when running preliminary experiments on pre-computed equilibrium strategies in FHP. A full network specification is provided in Appendix \ref{app:network}.

In the value network, the vector of outputs represented predicted advantages for each action at the input infoset. In the average strategy network, outputs are interpreted as logits of the probability distribution over actions. 

\begin{figure}[t]
\centering
\includegraphics[width=\columnwidth]{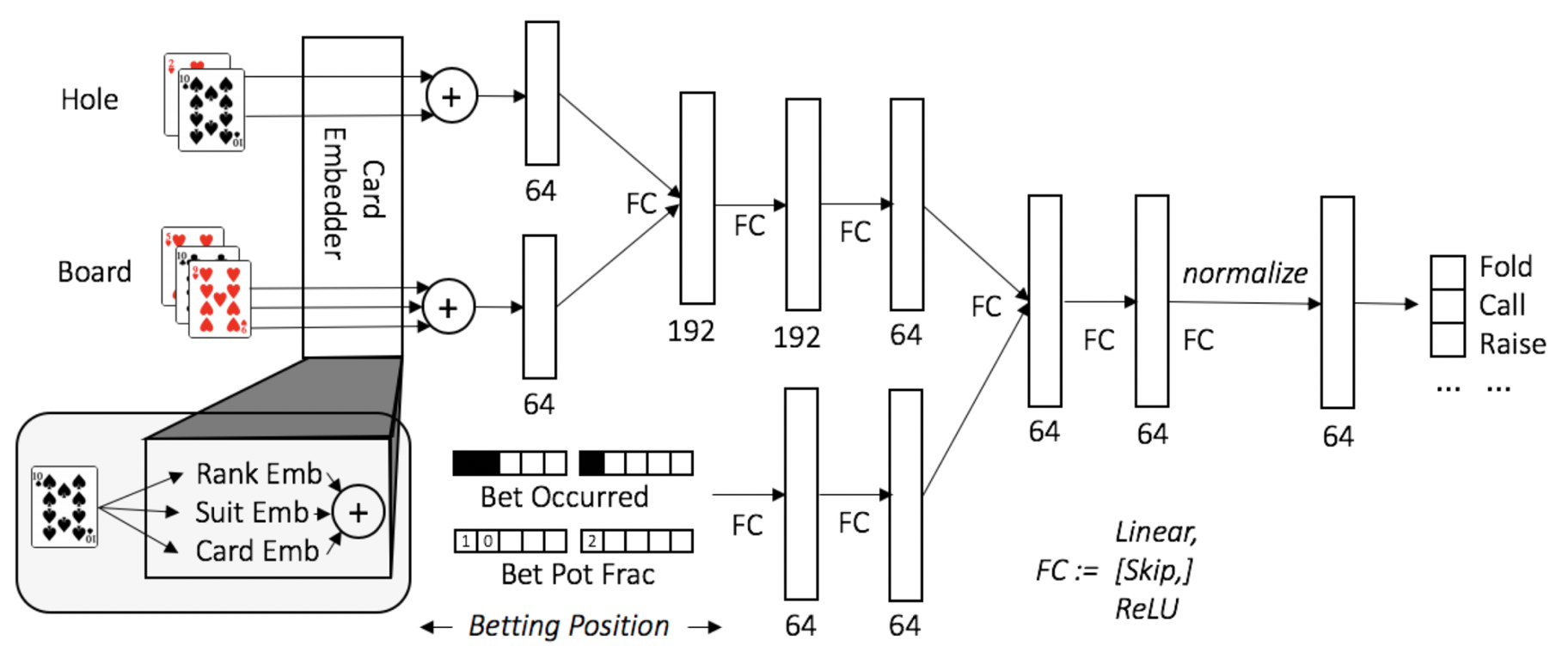}
\label{fig:nn_arch}
\vspace{-3mm}
\caption{\small{The neural network architecture used for Deep CFR. The network takes an infoset (observed cards and bet history) as input and outputs values (advantages or probability logits) for each possible action.}}
\end{figure}

\subsection{Model training}
\vspace{-0.03in}
We allocate a maximum size of 40 million infosets to each player's advantage memory $\mathcal{M}_{V,p}$  and the strategy memory $\mathcal{M}_{\Pi}$.
The value model is trained from scratch each CFR iteration, starting from a random initialization. We perform 4,000 mini-batch stochastic gradient descent (SGD) iterations using a batch size of 10,000 and perform parameter updates using the Adam optimizer \cite{Kingma14:Adam} with a learning rate of $0.001$, with gradient norm clipping to $1$. For HULH we use 32,000 SGD iterations and a batch size of 20,000. Figure~\ref{fig:ablations} shows that training the model from scratch at each iteration, rather than using the weights from the previous iteration, leads to better convergence.

\subsection{Linear CFR}
\vspace{-0.03in}
There exist a number of variants of CFR that achieve much faster performance than vanilla CFR. However, most of these faster variants of CFR do not handle approximation error well~\cite{Tammelin15:Solving,Burch17:Time,Brown19:Solving,Schmid19:Variance}. In this paper we use \emph{Linear CFR (LCFR)}~\cite{Brown19:Solving}, a variant of CFR that is faster than CFR and in certain settings is the fastest-known variant of CFR (particularly in settings with wide distributions in payoffs), and which tolerates approximation error well. LCFR is not essential and does not appear to lead to better performance asymptotically, but does result in faster convergence in our experiments.

LCFR is like CFR except iteration $t$ is weighed by $t$. Specifically, we maintain a \emph{weight} on each entry stored in the advantage memory and the strategy memory, equal to $t$ when this entry was added. When training $\theta_p$ each iteration $T$, we rescale all the batch weights by $\frac{2}{T}$ and minimize weighted error.


\section{Experimental Results}
\vspace{-0.03in}
Figure~\ref{fig:abstraction} compares the performance of Deep CFR to different-sized domain-specific abstractions in FHP. The abstractions are solved using external-sampling Linear Monte Carlo CFR~\cite{Lanctot09:Monte,Brown19:Solving}, which is the leading algorithm in this setting. The 40,000 cluster abstraction means that the more than $10^9$ different decisions in the game were clustered into 40,000 abstract decisions, where situations in the same bucket are treated identically. This bucketing is done using K-means clustering on domain-specific features. The \emph{lossless abstraction} only clusters together situations that are strategically isomorphic (e.g., flushes that differ only by suit), so a solution to this abstraction maps to a solution in the full game without error.

Performance and exploitability are measured in terms of milli big blinds per game (mbb/g), which is a standard measure of win rate in poker.

The figure shows that Deep CFR asymptotically reaches a similar level of exploitability as the abstraction that uses 3.6 million clusters, but converges substantially faster. Although Deep CFR is more efficient in terms of nodes touched, neural network inference and training requires considerable overhead that tabular CFR avoids. However, Deep CFR does not require advanced domain knowledge. We show Deep CFR performance for 10,000 CFR traversals per step. Using more traversals per step is less sample efficient and requires greater neural network training time but requires fewer CFR steps.

Figure~\ref{fig:abstraction} also compares the performance of Deep CFR to NFSP, an existing method for learning approximate Nash equilibria in imperfect-information games. NFSP approximates fictitious self-play, which is proven to converge to a Nash equilibrium but in practice does so far slower than CFR. We observe that Deep CFR reaches an exploitability of 37 mbb/g while NFSP converges to 47 mbb/g.\footnote{We run NFSP with the same model architecture as we use for Deep CFR. In the benchmark game of Leduc Hold'em, our implementation of NFSP achieves an average exploitability (total exploitability divided by two) of 37 mbb/g in the benchmark game of Leduc Hold'em, which is substantially lower than originally reported in \citet{Heinrich16:Deep}. We report NFSP's best performance in FHP across a sweep of hyperparameters.} We also observe that Deep CFR is more sample efficient than NFSP. However, these methods spend most of their wallclock time performing SGD steps, so in our implementation we see a less dramatic improvement over NFSP in wallclock time than sample efficiency. 

Figure~\ref{fig:traversals} shows the performance of Deep CFR using different numbers of game traversals, network SGD steps, and model size. As the number of CFR traversals per iteration is reduced, convergence becomes slower but the model converges to the same final exploitability. This is presumably because it takes more iterations to collect enough data to reduce the variance sufficiently. On the other hand, reducing the number of SGD steps does not change the rate of convergence but affects the asymptotic exploitability of the model. This is presumably because the model loss decreases as the number of training steps is increased per iteration (see Theorem \ref{th:deepcfr_approx}). Increasing the model size also decreases final exploitability up to a certain model size in FHP. 

In Figure~\ref{fig:ablations} we consider ablations of certain components of Deep CFR. Retraining the regret model from scratch at each CFR iteration converges to a substantially lower exploitability than fine-tuning a single model across all iterations. We suspect that this is because a single model gets stuck in bad local minima as the objective is changed from iteration to iteration. The choice of reservoir sampling to update the memories is shown to be crucial; if a sliding window memory is used, the exploitability begins to increase once the memory is filled up, even if the memory is large enough to hold the samples from many CFR iterations.

Finally, we measure head-to-head performance in HULH. We compare Deep CFR and NFSP to the approximate solutions (solved via Linear Monte Carlo CFR) of three different-sized abstractions: one in which the more than $10^{14}$ decisions are clustered into $3.3 \cdot 10^6$ buckets, one in which there are $3.3 \cdot 10^7$ buckets and one in which there are $3.3 \cdot 10^8$ buckets. The results are presented in Table~\ref{tab:limit}. For comparison, the largest abstractions used by the poker AI Polaris in its 2007 HULH man-machine competition against human professionals contained roughly $3 \cdot 10^8$ buckets. When variance-reduction techniques were applied, the results showed that the professional human competitors lost to the 2007 Polaris AI by about $52 \pm 10$ mbb/g~\cite{Johanson16:Robust}. In contrast, our Deep CFR agent loses to a $3.3 \cdot 10^8$ bucket abstraction by only $-11 \pm 2$ mbb/g and beats NFSP by $43 \pm 2$ mbb/g.


\begin{table*}[t]
\small
\begin{center}
\begin{tabular}{ l | c c | c c c}
\toprule
               & \multicolumn{4}{c}{\bf Opponent Model} & \\
               &      & \multicolumn{1}{ c}{} & \multicolumn{3}{| c }{ Abstraction Size} \\
{\bf Model}    & NFSP & Deep CFR & \multicolumn{1}{| c}{ $3.3 \cdot 10^6$} & $3.3 \cdot 10^7$ & $3.3 \cdot 10^8$ \\
\midrule
NFSP           & - & $-43 \pm 2$ mbb/g &$ -40 \pm 2 $ mbb/g&$ -49 \pm 2 $ mbb/g&$ -55 \pm 2 $ mbb/g \\
Deep CFR       & $+43 \pm 2$ mbb/g & - &$ \ +6 \pm 2 $ mbb/g&$ \ -6 \pm 2 $ mbb/g&$ -11 \pm 2 $ mbb/g \\
\bottomrule
\end{tabular}
\end{center}
\vspace{-3mm}
\caption{Head-to-head expected value of NFSP and Deep CFR in HULH against converged CFR equilibria with varying abstraction sizes. For comparison, in 2007 an AI using abstractions of roughly $3 \cdot 10^8$ buckets defeated human professionals by about $52$ mbb/g (after variance reduction techniques were applied).
}
\label{tab:limit}
\end{table*}

\begin{figure}[h!]
\centering
\includegraphics[width=3.3in]{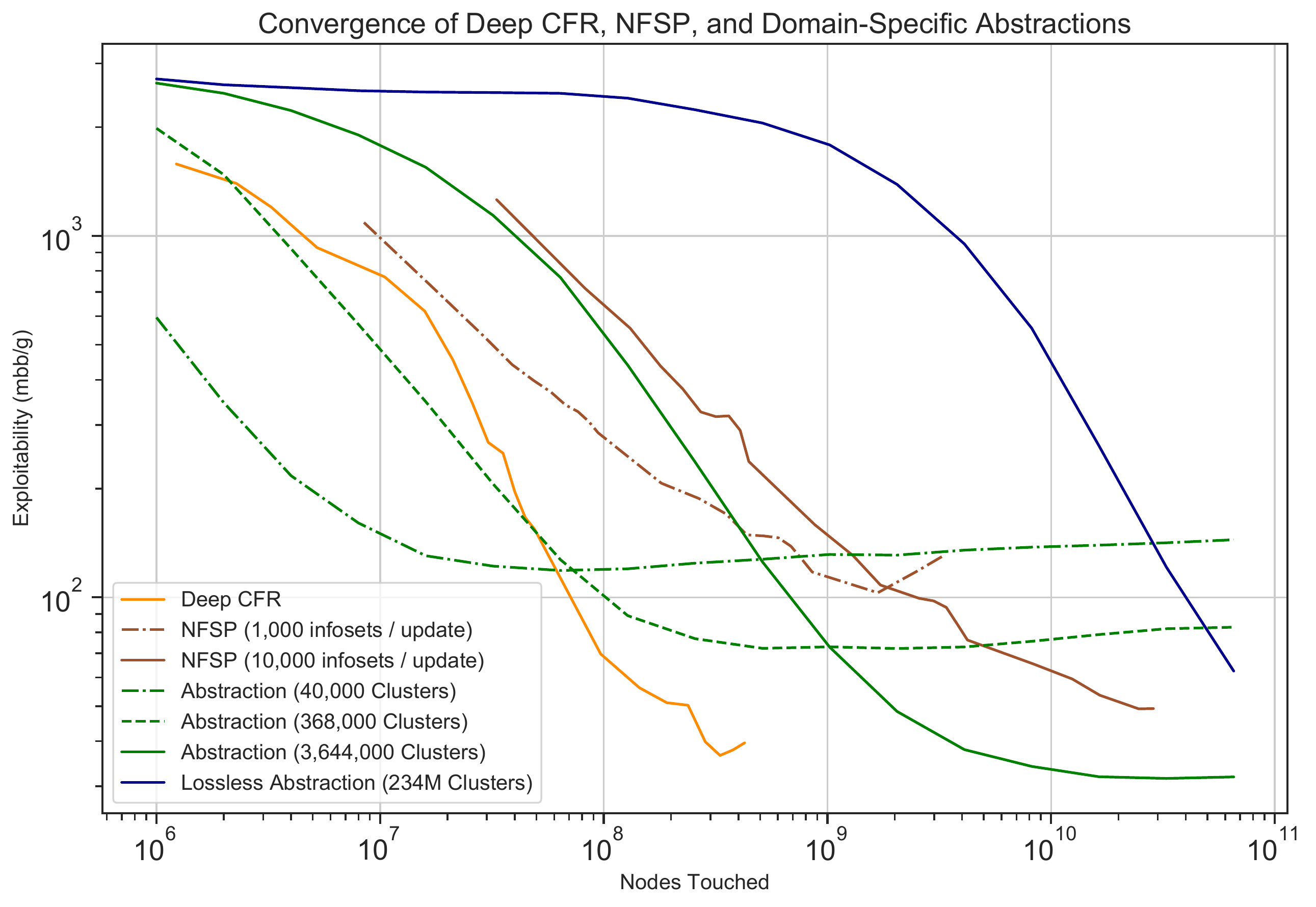}
\vspace{-3mm}
\caption{\small{Comparison of Deep CFR with domain-specific tabular abstractions and NFSP in FHP. Coarser abstractions converge faster but are more exploitable. Deep CFR converges with 2-3 orders of magnitude fewer samples than a lossless abstraction, and performs competitively with a 3.6 million cluster abstraction. Deep CFR achieves lower exploitability than NFSP, while traversing fewer infosets.}}
\label{fig:abstraction}
\end{figure}

\begin{figure*}[h]
\centering
\hspace{-.5cm}
\includegraphics[width=2.3in]{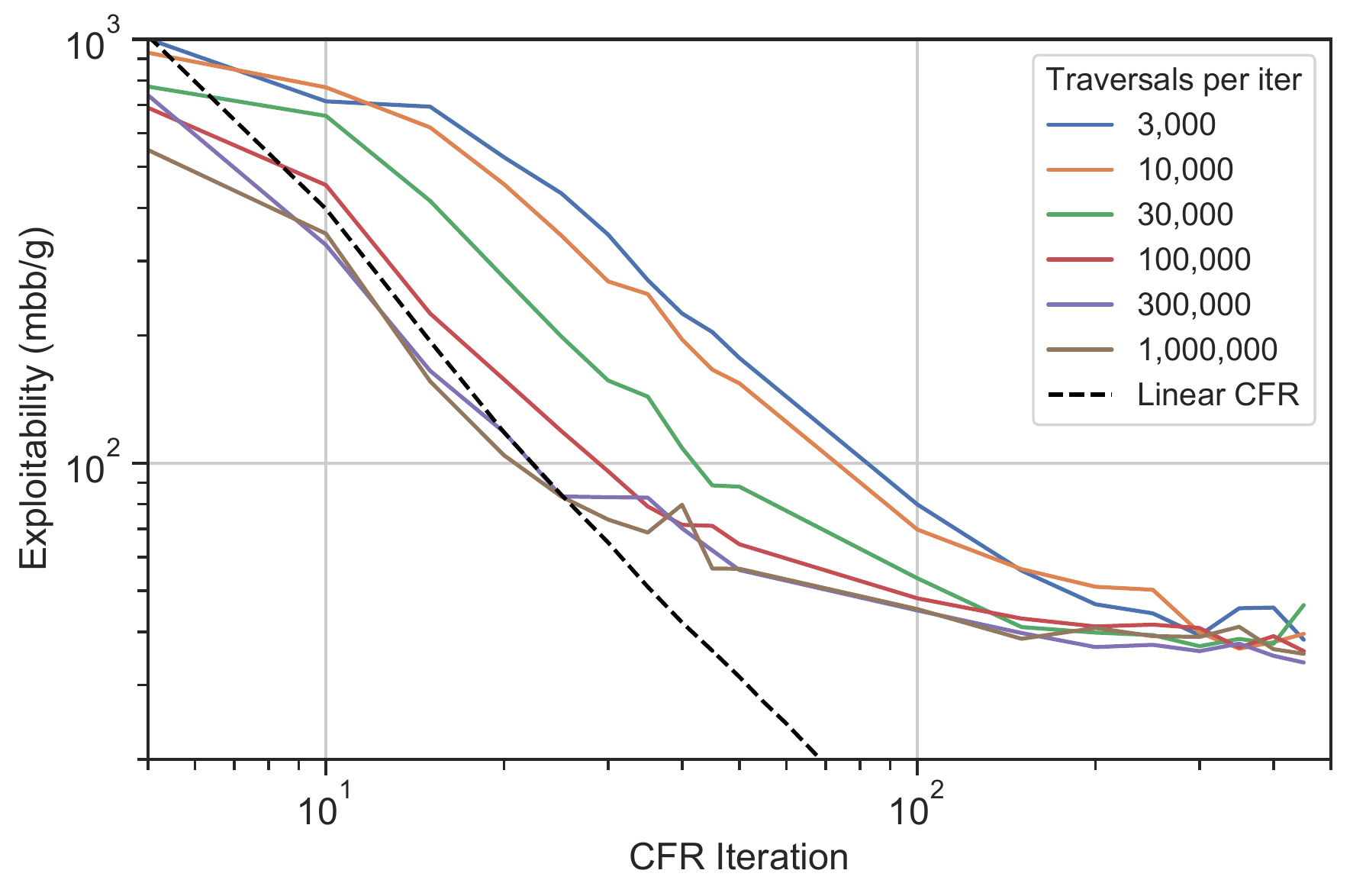}
\hspace{-.2cm}
\includegraphics[width=2.3in]{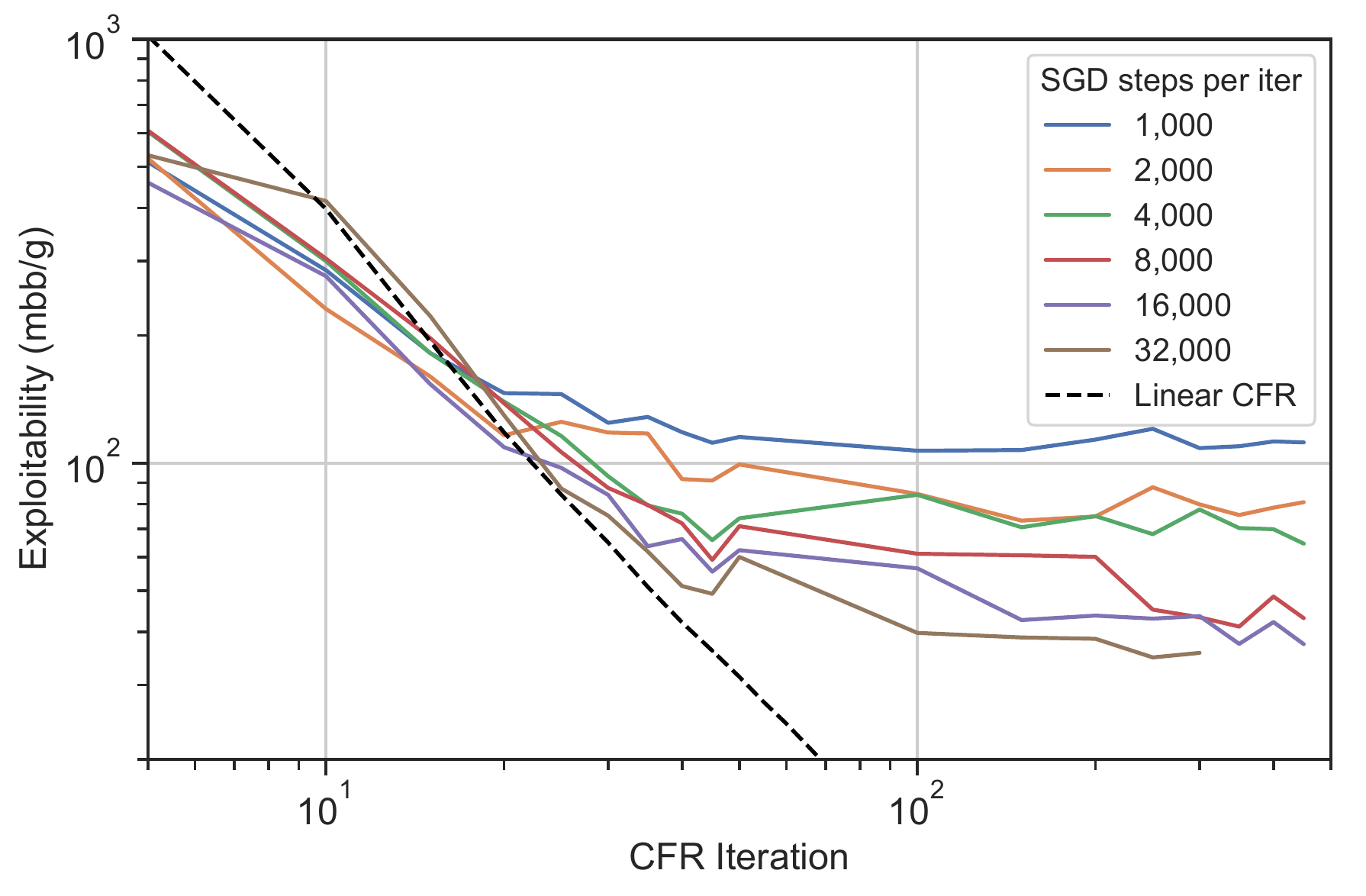}
\hspace{-.2cm}
\includegraphics[width=2.3in]{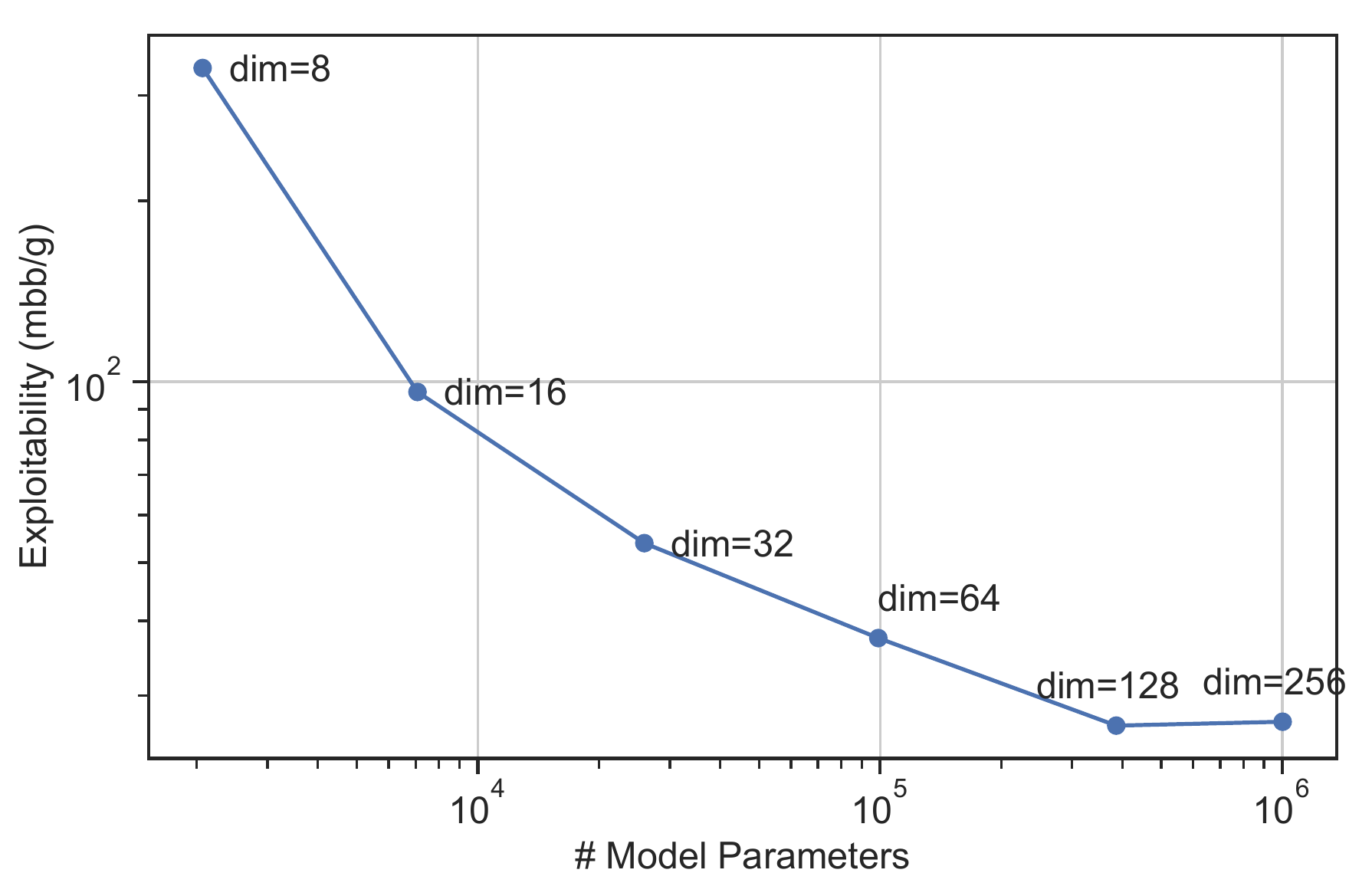}

\vspace{-3mm}
\caption{\small{\textbf{Left:} FHP convergence for different numbers of training data collection traversals per simulated LCFR iteration. The dotted line shows the performance of vanilla tabular Linear CFR without abstraction or sampling. \textbf{Middle:} FHP convergence using different numbers of minibatch SGD updates to train the advantage model at each LCFR iteration. \textbf{Right:} Exploitability of Deep CFR in FHP for different model sizes. Label indicates the dimension (number of features) in each hidden layer of the model.}}
\label{fig:traversals}
\end{figure*}


\begin{figure*}[h!]
\hspace{1cm}
\includegraphics[width=2.5in]{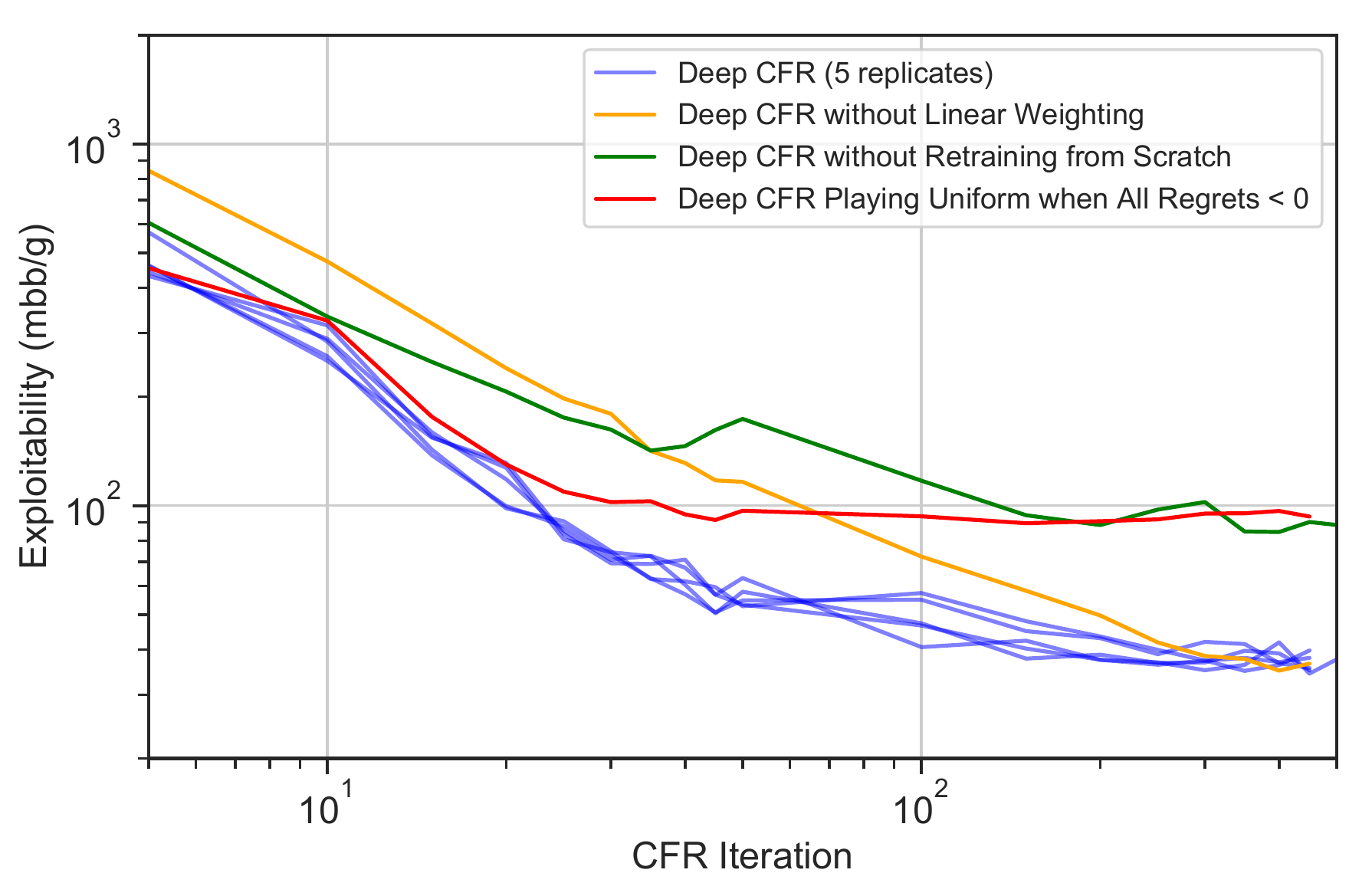}
\hspace{.5cm}
\includegraphics[width=2.5in]{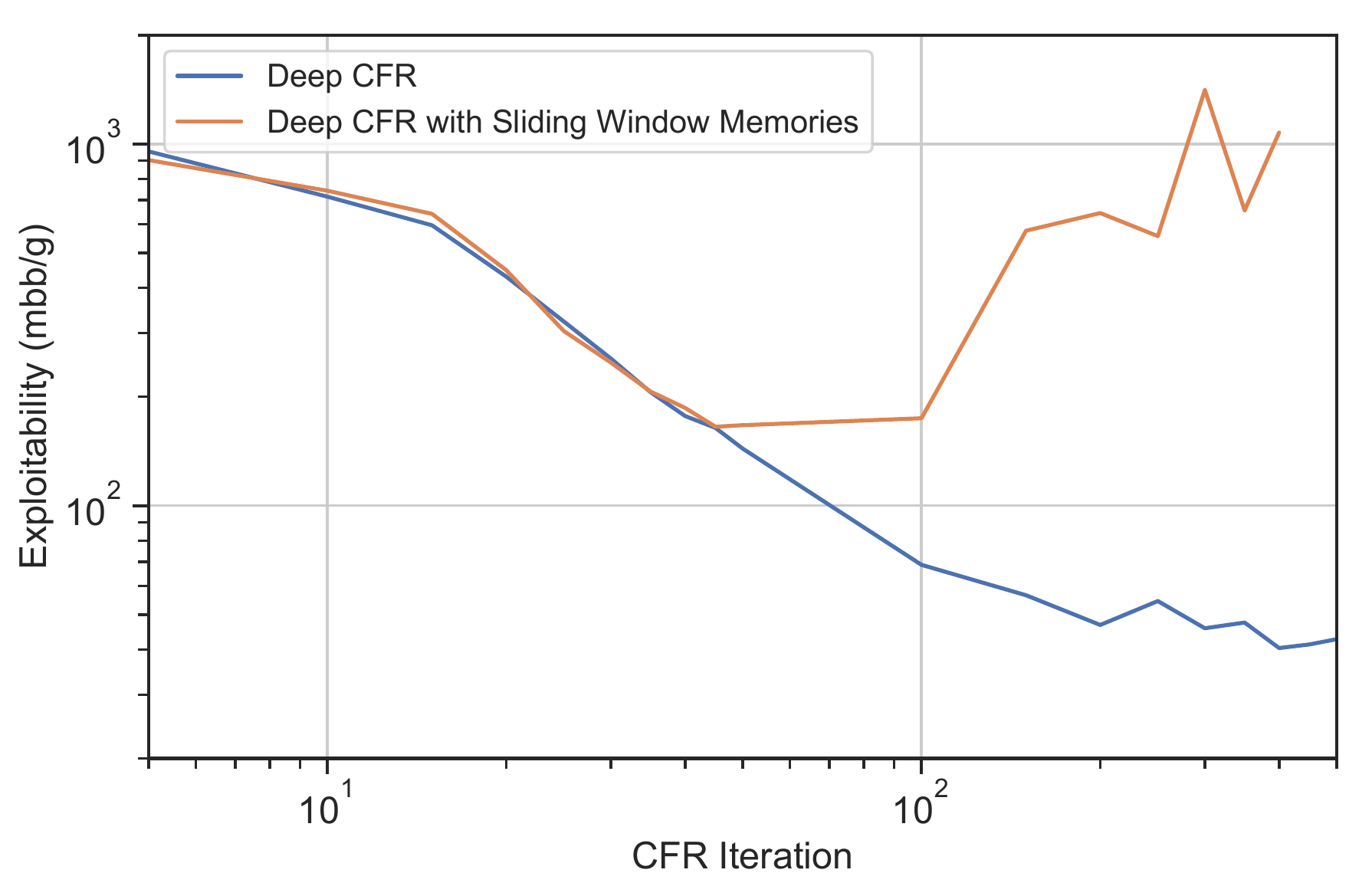}
\vspace{-3mm}
\caption{\small{Ablations of Deep CFR components in FHP. \textbf{Left:} As a baseline, we plot 5 replicates of Deep CFR, which show consistent exploitability curves (standard deviation at $t=450$ is $2.25$ mbb/g). Deep CFR without linear weighting converges to a similar exploitability, but more slowly. If the same network is fine-tuned at each CFR iteration rather than training from scratch, the final exploitability is about 50\% higher. Also, if the algorithm plays a uniform strategy when all regrets are negative (i.e. standard regret matching), rather than the highest-regret action, the final exploitability is also 50\% higher. \textbf{Right:} If Deep CFR is performed using sliding-window memories, exploitability stops converging once the buffer becomes full\footnote{With our standard hyperparameters, the memory fills up in just 1-2 iterations, so for this experiment we perform only 10,000 CFR traversals per iteration to demonstrate that convergence ceases once the buffer fills up after about 50 iterations.}. However, with reservoir sampling, convergence continues after the memories are full.}}
\label{fig:ablations}
\end{figure*}



\section{Conclusions}
\vspace{-0.03in}

We describe a method to find approximate equilibria in large imperfect-information games by combining the CFR algorithm with deep neural network function approximation. This method is theoretically principled and achieves strong performance in large poker games relative to domain-specific abstraction techniques without relying on advanced domain knowledge. This is the first non-tabular variant of CFR to be successful in large games.

Deep CFR and other neural methods for imperfect-information games provide a promising direction for tackling large games
whose state or action spaces are too large for tabular methods and where abstraction is not straightforward. Extending Deep CFR to
larger games will likely require more scalable sampling strategies than those used in this work, as well as strategies to reduce the high variance in sampled payoffs. Recent work has suggested promising directions both for more scalable sampling \cite{Li18:Double} and variance reduction techniques \cite{Schmid19:Variance}. We believe these are important areas for future work.

\clearpage

\bibliography{dairefs}
\bibliographystyle{icml2019}

\clearpage

\newpage

\appendix

\onecolumn

\section{Rules for Heads-Up Limit Texas Hold'em and Flop Hold'em Poker}
\label{sec:rules}
Heads-up limit Texas hold'em is a two-player zero-sum game. There are two players and the position of the two players alternate after each hand. On each betting round, each player can choose to either fold, call, or raise. Folding results in the player losing and the money in the pot being awarded to the other player. Calling means the player places a number of chips in the pot equal to the opponent's share. Raising means that player adds more chips to the pot than the opponent's share. A round ends when a player calls (if both players have acted). There cannot be more than three raises in the first or second betting round or more than four raises in the third or fourth betting round, so there is a limited number of actions in the game. Raises in the first two rounds are \$100 and raises in the second two rounds are \$200.

At the start of each hand of HULH, both players are dealt two private cards from a standard 52-card deck. $P_1$ must place \$50 in the pot and $P_2$ must place \$100 in the pot. A round of betting then occurs starting with $P_1$. When the round ends, three \emph{community} cards are dealt face up that both players can ultimately use in their final hands. Another round of betting occurs, starting with $P_2$ this time. Afterward another community card is dealt face up and another betting round occurs. Then a final card is dealt face up and a final betting round occurs. At the end of the betting round, unless a player has folded, the player with the best five-card poker hand constructed from their two private cards and the five community cards wins the pot. In the case of a tie, the pot is split evenly.

Flop Hold'em Poker is identical to HULH except there are only the first two betting rounds.

\section{Proofs of Theorems}
\label{app:proof}
\subsection{Review of MCCFR}

We begin by reviewing the derivation of convergence bounds for external sampling MCCFR from \citealt{Lanctot09:Monte}.

An MCCFR scheme is completely specified by a set of \emph{blocks} $\mathcal{Q}=\{Q_i\}$ which each comprise a subset of all terminal histories $Z$. On each iteration MCCFR samples one of these blocks, and only considers terminal histories within that block. Let $q_j > 0$ be the probability of considering block $Q_j$ in an iteration. 

Let $Z_I$ be the set of terminal nodes that contain a prefix in $I$, and let $z[I]$ be that prefix. 
Define $\pi^\sigma(h\to z)$ as the probability of playing to $z$ given that player $p$ is at node $h$ with both players playing $\sigma$. $$\pi^\sigma(h\to z) = \sum_{z\in Z_I} \frac{\pi^\sigma(z[I])}{\pi^\sigma(I)}  \pi^\sigma(z).$$ 
$\pi^\sigma(I\to z)$ is undefined when $\pi(I)=0$.

Let $q(z)=\sum_{j:z\in Q_j}{q_j}$ be the probability that terminal history $z$ is sampled in an iteration of MCCFR. For external sampling MCCFR, $q(z)=\pi^\sigma_{-i}(z)$.

The \emph{sampled value} $\tilde{v}^\sigma_i(I|j)$ when sampling block $j$ is

\begin{equation}
    \tilde{v}_p^\sigma(I|j) = \sum_{z\in Q_j \cap Z_I}{
        \frac{1}{q(z)}
        u_p(z)
        \pi^\sigma_{-p}(z[I])
        \pi^\sigma(z[I] \to z)
    }
\end{equation}

For external sampling, the sampled value reduces to
\begin{equation}
        \tilde{v}^\sigma_p(I|j) = \sum_{z\in Q_j \cap Z_I}{
        u_p(z)
        \pi_p^\sigma(z[I] \to z)
    }
\end{equation}

The sampled value is an unbiased estimator of the true value $v_p(I)$. Therefore the \textit{sampled instantaneous regret} $\tilde{r}^t(I,a) = \tilde{v}_p^{\sigma^t}(I,a) - \tilde{v}_p^{\sigma^t}(I)$ is an unbiased estimator of $r^t(I,a)$.

The \textit{sampled regret} is calculated as $\tilde{R}^T(I,a)=\sum_{t=1}^T \tilde{r}^t(I,a)$.

We first state the general bound shown in \cite{lanctot2013monte}, Theorem 3.

\citealt{lanctot2013monte} defines $\mathcal{B}_p$ to be a set with one element per distinct action sequence $\vec{a}$ played by $p$, containing all infosets that may arise when $p$ plays $\vec{a}$. $M_p$ is then defined by $\sum_{B \in \mathcal{B}_p} |B|$.   Let $\Delta$ be the difference between the maximum and minimum payoffs in the game.

\begin{theorem} (\citealt{lanctot2013monte}, Theorem 3)
For any $p \in (0,1]$, when using any algorithm in the MCCFR family such that for all $Q\in \mathcal{Q}$ and B $\in \mathcal{B}_p$,
\begin{equation}
    \sum_{I\in B}\left( 
        \sum_{z \in Q \cap Z_I}{
            \frac{
                \pi^\sigma(z[I] \to z)\pi^\sigma_{-p}(z[I])
            }{
                q(z)
            }
        }
    \right)^2 \leq \frac{1}{\delta^2}
\label{thm:lanctot_general_cond}
\end{equation}

where $\delta \leq 1$, then with probability at least $1-\rho$, total regret is bounded by

\begin{equation}
    R_p^T \leq
    \left( M_p + \frac{\sqrt{2|\mathcal{I}_p||\mathcal{B}_p|}}{\sqrt{\rho}} \right)
    \left( \frac{1}{\delta} \right)
    \Delta \sqrt{|A|T}
\label{thm:lanctot_general}
\end{equation}
.
\end{theorem}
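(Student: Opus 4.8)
The plan is to bound the true total regret by the per-infoset counterfactual regrets that regret matching actually controls, and then to convert a bound on the \emph{sampled} regret into a high-probability bound on the \emph{true} regret through a second-moment (martingale) concentration argument, in which Markov's inequality supplies the $\tfrac{1}{\sqrt{\rho}}$ factor. First I would invoke the Zinkevich et al.\ decomposition recalled above, $R_p^T \le \sum_{I\in\mathcal{I}_p} R^{T,+}(I)$, so that it suffices to control each $R^{T,+}(I)=\max_a R^T_+(I,a)$. Since the algorithm plays regret matching on the sampled regrets $\tilde{r}^t(I,\cdot)$, I would split $R^T(I,a) = \tilde{R}^T(I,a) + D^T(I,a)$ with $D^T(I,a)=\sum_{t=1}^T\big(r^t(I,a)-\tilde{r}^t(I,a)\big)$, so that $R^{T,+}(I) \le \tilde{R}^{T,+}(I) + \max_a|D^T(I,a)|$. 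The first summand is deterministic and handled by the Blackwell/regret-matching bound $\big(\tilde{R}^{T,+}(I)\big)^2 \le \sum_{t=1}^T \sum_a \big(\tilde{r}^t(I,a)\big)^2$; the second is a pure fluctuation term, since $\tilde{r}^t$ is an unbiased estimator of $r^t$, so $D^T(I,\cdot)$ is a sum of martingale differences.

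For the deterministic part I would exploit the block structure. The magnitude of a sampled instantaneous regret at $I$ equals, up to a payoff factor bounded by $\Delta$, the importance-weighted reach $\sum_{z\in Q\cap Z_I}\frac{\pi^\sigma(z[I]\to z)\pi^\sigma_{-p}(z[I])}{q(z)}$ appearing in the hypothesis. Perfect recall guarantees that the infosets sharing an action sequence form the blocks $B\in\mathcal{B}_p$, and the hypothesis bounds $\sum_{I\in B}(\cdots)^2 \le 1/\delta^2$ for every sampled $Q$. Combining the regret-matching bound with Cauchy--Schwarz over the $|B|$ infosets of a block and summing over the $T$ iterations gives $\sum_{I\in B}\big(\tilde{R}^{T,+}(I)\big)^2 \le |A|\tfrac{\Delta^2 T}{\delta^2}$, hence $\sum_{I\in B}\tilde{R}^{T,+}(I) \le \sqrt{|B|}\,\tfrac{1}{\delta}\Delta\sqrt{|A|T}$. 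Summing over blocks and using $\sqrt{|B|}\le|B|$ to collapse the counting into $M_p=\sum_{B}|B|$ produces the leading term $M_p\tfrac{1}{\delta}\Delta\sqrt{|A|T}$.

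For the fluctuation part I would bound the conditional variance of each martingale increment, $\mathbb{E}\big[(r^t(I,a)-\tilde{r}^t(I,a))^2\mid\mathcal{F}_{t-1}\big] \le \mathbb{E}\big[\tilde{r}^t(I,a)^2\mid\mathcal{F}_{t-1}\big]$, by the very same importance-weighted reach the hypothesis controls, so the accumulated second moment of $\sum_I \max_a|D^T(I,a)|$ over all infosets and blocks is of order $|\mathcal{I}_p||\mathcal{B}_p|\,\tfrac{1}{\delta^2}\Delta^2|A|T$ (via one further Cauchy--Schwarz over the $|\mathcal{I}_p|$ infosets and the per-block variance bound). Applying Markov's inequality to this nonnegative quantity then yields, with probability at least $1-\rho$, a fluctuation bound of $\tfrac{\sqrt{2|\mathcal{I}_p||\mathcal{B}_p|}}{\sqrt{\rho}}\tfrac{1}{\delta}\Delta\sqrt{|A|T}$; adding the two contributions gives the stated inequality.

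The main obstacle I anticipate is precisely this concentration step: correctly identifying the martingale-difference structure, bounding each conditional second moment by exactly the reach quantity in the hypothesis, and summing these variances across infosets and blocks so that (i) the $\sqrt{T}$ rather than $T$ scaling survives and (ii) the combinatorial constant emerges as exactly $\sqrt{2|\mathcal{I}_p||\mathcal{B}_p|}$ rather than a looser factor. A secondary, more routine difficulty is carrying the Cauchy--Schwarz over blocks through cleanly so that the deterministic term collapses to exactly $M_p$.
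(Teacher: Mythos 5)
Your proposal is correct and takes essentially the same route as the proof this theorem rests on (the paper states it as a quotation of \citealt{lanctot2013monte}, Theorem 3, and reuses exactly these ingredients in its own proof of the $K$-external-sampling variant): the Zinkevich per-infoset decomposition, a split of true regret into sampled regret plus a martingale deviation, the Blackwell/regret-matching bound combined with the block hypothesis and Cauchy--Schwarz to produce the $M_p \Delta\sqrt{|A|T}/\delta$ term, and a second-moment bound on $\sum_I\bigl(R^T(I)-\tilde{R}^T(I)\bigr)$ plus Markov's inequality to produce the $\sqrt{2|\mathcal{I}_p||\mathcal{B}_p|}\,\Delta\sqrt{|A|T}/(\delta\sqrt{\rho})$ term. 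The two details you flag as delicate --- the exact constant $2$ in the variance bound and collapsing the per-block $\sqrt{|B|}$ factors into $M_p$ --- are resolved in precisely the way you sketch in the original source (\citealt{lanctot2013monte}, Eqs.\ (4.30) and (4.33)), so your outline has no substantive gap.
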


For the case of external sampling MCCFR, $q(z)=\pi^\sigma_{-i}(z)$. \citealt{Lanctot09:Monte}, Theorem 9 shows that for external sampling, for which $q(z)=\pi^\sigma_{-i}(z)$, the inequality in (\ref{thm:lanctot_general_cond}) holds for $\delta=1$, and thus the bound implied by (\ref{thm:lanctot_general}) is

\begin{align}
    \bar{R}_p^T \leq &
    \left( M_p + \frac{\sqrt{2|\mathcal{I}_p||\mathcal{B}_p|}}{\sqrt{\rho}} \right)
    \Delta \frac{\sqrt{|A|}}{\sqrt{T}} \\
    \leq & \left(1 + \frac{\sqrt{2}}{\sqrt{\rho K}}\right)\Delta |\mathcal{I}_p| \frac{\sqrt{|A|}}{\sqrt{T}} \hspace{2cm} \textrm{because $|\mathcal{B}_p| \leq M_p \leq |\mathcal{I}_p|$}
\label{thm:lanctot_external1}
\end{align}
.

\subsection{Proof of Lemma \ref{lemma:value_lemma}}
\label{app:value_lemma}

We show $$\mathbb{E}_{Q_j\sim \mathcal{Q}} \left[ \tilde{v}_p^{\sigma^t}(I) \middle|  Z_I \cap Q_j \neq \emptyset \right] = v^{\sigma^t}(I) / \pi_{-p}^{\sigma^t}(I).$$

Let $q_j = P(Q_j)$.
\begin{align*}
    \mathbb{E}_{Q_j\sim \mathcal{Q}} \left[ \tilde{v}_p^{\sigma^t}(I) \middle| Z_I \cap Q \neq \emptyset \right] & = \frac{\mathbb{E}_{Q_j\sim \mathcal{Q}} \left[ \tilde{v}_p^{\sigma^t}(I) \right] }{ P_{Q_j \sim \mathcal{Q}}(Z_I \cap Q_j \neq \emptyset)} \\
    & = \frac{\sum_{Q_j\in \mathcal{Q}} q_j \sum_{z\in Z_I \cap Q_j} u_p(z) \pi_{-p}^{\sigma^t}(z[I]) \pi^{\sigma^t}(z[I] \to z) / q(z) }{ \pi_{-p}^{\sigma^t}(I)} \\
    & = \frac{  \sum_{z\in Z_I \cap Q_j} \left( \sum_{Q_j: z \in Q_j} q_j \right) u_p(z) \pi_{-p}^{\sigma^t}(z[I]) \pi^{\sigma^t}(z[I] \to z) / q(z) }{ \pi_{-p}^{\sigma^t}(I)} \\
    & = \frac{ \sum_{z\in Z_I} q(z) u_p(z) \pi_{-p}^{\sigma^t}(z[I]) \pi^{\sigma^t}(z[I] \to z) / q(z) }{ \pi_{-p}^{\sigma^t}(I)} \hspace{1cm} \textrm{By definition of $q(z)$} \\
    & = \frac{ v^{\sigma^t}(I) }{\pi_{-p}^{\sigma^t}(I)}
\end{align*}

The result now follows directly.

\subsection{$K$-external sampling}

\newcommand{\Rp}[1]{R_{#1}^+(a)}
\newcommand{\Rsum}[1]{\Rp{{\Sigma,#1}}}
\newcommand{\aA}{\sum_{a\in A}}

We first show that performing MCCFR with $K$ external sampling traversals per iteration ($K$-ES) shares a similar convergence bound with standard external sampling (i.e. $1$-ES). We will refer to this result in the next section when we consider the full Deep CFR algorithm. This convergence bound is rather obvious and the derivation pedantic, so the reader is welcome to skip this section.

We model $T$ rounds of $K$-external sampling as $T\times K$ rounds of external sampling, where at each round $t \cdot K + d$ (for integer $t \geq 0$ and integer $0 \leq d < K$) we play

\begin{align}
\label{eq:rm2}
\sigma_{tK+d}(a) = 
\begin{cases}
\frac{\Rp{tK}}{R_{\Sigma,{tK}}^+}\textrm{\ if\ } R_{\Sigma,{tK}}^+ > 0 \\
\textrm{\ arbitrary,\ otherwise}
\end{cases}
\end{align}

In prior work, $\sigma$ is typically defined to play $\frac{1}{|A|}$ when $\Rsum{T} \leq 0$, but in fact the convergence bounds do not constraint $\sigma$'s play in these situations, which we will demonstrate explicitly here. We need this fact because minimizing the loss $\mathcal{L}(V)$ is defined only over the samples of (visited) infosets and thus does not constrain the strategy in unvisited infosets.

\begin{lemma}
\label{lemma:rmneg}
If regret matching is used in $K$-ES, then for $0 \leq d < K$
\begin{equation}
    \sum_{a\in A} \Rp{tK} r_{tK+d}(a) \leq 0
\end{equation}
\end{lemma}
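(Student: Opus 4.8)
The plan is to reduce the claim to the elementary Blackwell-type identity underlying regret matching: the instantaneous regret, averaged against the very strategy that produced it, vanishes. Recall from the \textsc{Traverse} routine that at a traverser infoset the sampled instantaneous regret is $r_{tK+d}(a) = v(a) - \sum_{a'\in A} \sigma_{tK+d}(a')\, v(a')$, where $v(\cdot)$ are the action values returned on that traversal and the subtracted term is the strategy-weighted infoset value. Writing $V := \sum_{a'\in A} \sigma_{tK+d}(a')\, v(a')$ for this constant, we get
\begin{equation}
\aA \sigma_{tK+d}(a)\, r_{tK+d}(a) = \aA \sigma_{tK+d}(a)\, v(a) - V \aA \sigma_{tK+d}(a) = V - V = 0,
\end{equation}
where the last step uses $\aA \sigma_{tK+d}(a) = 1$. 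The key structural point is that across an entire block of $K$ external-sampling traversals the strategy is \emph{frozen} at $\sigma_{tK+d} \propto R_{tK}^+(\cdot)$, so this identity holds verbatim for every $0 \le d < K$.

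From here I would split on the denominator appearing in \eqref{eq:rm2}. If $R_{\Sigma,tK}^+ = \aA \Rp{tK} > 0$, then $\sigma_{tK+d}(a) = \Rp{tK}/R_{\Sigma,tK}^+$; multiplying the displayed identity by the positive scalar $R_{\Sigma,tK}^+$ yields $\aA \Rp{tK}\, r_{tK+d}(a) = 0$, which is in particular $\le 0$.

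The remaining case $R_{\Sigma,tK}^+ = 0$ is exactly the degenerate situation the surrounding discussion is meant to handle, and where the scaling argument fails because $\sigma_{tK+d}$ is arbitrary. Here I would argue directly: since each $\Rp{tK} \ge 0$ and their sum is zero, every $\Rp{tK} = 0$, so $\aA \Rp{tK}\, r_{tK+d}(a) = 0 \le 0$ no matter which arbitrary strategy was played. Combining the two cases proves the lemma.

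There is no substantive obstacle; the only point worth stressing is why the statement is phrased as $\le 0$ when both cases in fact give equality. The inequality is exactly strong enough for the downstream argument — that the squared-regret potential $\aA (\Rp{T})^2$ grows at most linearly — yet it deliberately imposes \emph{no} constraint on $\sigma_{tK+d}$ when all regrets are nonpositive. This is the freedom needed later, since minimizing the loss $\mathcal{L}(V)$ pins down the learned strategy only on visited infosets and leaves it unconstrained elsewhere.
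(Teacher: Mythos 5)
Your proof is correct and takes essentially the same approach as the paper: both split on whether $R_{\Sigma,tK}^+$ is positive or zero, handle the degenerate case by noting all positive-part regrets must then vanish, and in the main case exploit that the regret-matching strategy is proportional to $\Rp{tK}$ so the weighted sum of instantaneous regrets cancels exactly to zero. Your factoring of the argument through the identity $\aA \sigma_{tK+d}(a)\,r_{tK+d}(a) = 0$ before rescaling is just a reorganization of the paper's inline expansion of $r_{tK+d}(a) = u_{tK+d}(a) - u_{tK+d}(\sigma_{tK})$, not a different route.
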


\begin{proof}
If $R_{\Sigma,{tK}}^+ \leq 0$, then $R_{tK}^+(a)=0$ for all $a$ and the result follows directly. For $R_{\Sigma,{tK}}^+ > 0$,

\begin{align}
    \aA R_{tK}^+(a) r_{tK+d}(a) & = \aA R_T^+(a)(u_{tK+d}(a)-u_{tK+d}(\sigma_{tK})) \\
    & = \left( \aA R_{tK}^+(a) u_{tK+d}(a) \right) - 
         \left( u_{tK+d}(\sigma_{tK})\aA R_{tK}^+(a) \right) \\
    & = \left( \aA R_{tK}^+(a) u_{tK+d}(a) \right) - 
         \left( \aA \sigma_{tK+d}(a)u_{tK+d}(a) \right) \Rsum{tK} \\
    & = \left( \aA R_{tK}^+(a) u_{tK+d}(a) \right) - 
         \left( \aA \frac{\Rp{tK}}{\Rsum{tK}} u_{tK+d}(a) \right) \Rsum{tK} \\
    & = \left( \aA R_{tK}^+(a) u_{tK+d}(a) \right) - 
         \left( \aA \Rp{tK}(a) u_{tK+d}(a) \right) \\
    & = 0
\end{align}
\end{proof} 

\begin{theorem}
\label{th:blackwell}
Playing according to Equation \ref{eq:rm2} guarantees the following bound on total regret
\begin{equation}
    \sum_{a\in A}(\Rp{TK})^2 \leq |A|\Delta^2 K^2 T
\end{equation}
\end{theorem}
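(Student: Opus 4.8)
The plan is to run the standard Blackwell potential argument for regret matching, but at the granularity of whole blocks of $K$ external-sampling rounds rather than individual rounds. The essential point is that within block $t$ the strategy is frozen at $\sigma_{tK+d} \propto R_{tK}^+$, so the usual per-round cancellation of the cross term is unavailable; the remedy is to aggregate the increments over the block. Define the within-block regret $\bar{r}_t(a) = \sum_{d=0}^{K-1} r_{tK+d}(a)$, so that $R_{(t+1)K}(a) = R_{tK}(a) + \bar{r}_t(a)$. First I would record the purely algebraic fact that for any reals $x,y$ one has $\big((x+y)^+\big)^2 \le (x^+ + y)^2$ (verified by splitting into the cases $x \ge 0$ and $x < 0$, where in the latter case $x^+ = 0$ and $(x+y)^+ < y^+$ when $x+y>0$). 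Applying this with $x = R_{tK}(a)$ and $y = \bar{r}_t(a)$ yields, for every action, $(\Rp{(t+1)K})^2 \le (\Rp{tK} + \bar{r}_t(a))^2$. Note this inequality is independent of the strategy played, so holding $\sigma$ fixed across the block costs nothing here.

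Next I would expand the square and sum over actions to obtain
$$\aA (\Rp{(t+1)K})^2 \le \aA (\Rp{tK})^2 + 2\aA \Rp{tK}\,\bar{r}_t(a) + \aA \bar{r}_t(a)^2.$$
The cross term is precisely where Lemma \ref{lemma:rmneg} does the work: since $\bar{r}_t(a) = \sum_{d=0}^{K-1} r_{tK+d}(a)$, we have $\aA \Rp{tK}\,\bar{r}_t(a) = \sum_{d=0}^{K-1} \aA \Rp{tK}\, r_{tK+d}(a) \le 0$, because each of the $K$ inner summands is nonpositive by Lemma \ref{lemma:rmneg}. Hence a block update can only increase the potential $\aA (\Rp{\cdot})^2$ through the squared-increment term $\aA \bar{r}_t(a)^2$.

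To bound that term I would use that each instantaneous regret is a difference of payoffs and is therefore bounded in magnitude by $\Delta$, so $|\bar{r}_t(a)| \le K\Delta$ and $\aA \bar{r}_t(a)^2 \le |A| K^2 \Delta^2$. Chaining the block inequalities for $t = 0, 1, \dots, T-1$ telescopes the potential, and since regrets start at zero ($\Rp{0} = 0$) the accumulated bound is $\aA (\Rp{TK})^2 \le \sum_{t=0}^{T-1} |A| K^2 \Delta^2 = |A| \Delta^2 K^2 T$, as claimed. The only genuine obstacle is the cross term: because the strategy is held constant across the $K$ rounds of a block, it cannot be cancelled round-by-round in the usual way, and Lemma \ref{lemma:rmneg} — which guarantees nonpositivity against the \emph{block-start} positive regret rather than the current cumulative regret — is exactly the ingredient that rescues the telescoping. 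Everything else is the routine Blackwell bookkeeping.
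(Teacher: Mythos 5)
Your proof is correct and follows essentially the same route as the paper's: a block-level Blackwell potential argument where the square is expanded over a block of $K$ rounds, the cross term is killed by Lemma \ref{lemma:rmneg}, the squared increment is bounded by $|A|K^2\Delta^2$, and the bounds are chained (your telescoping is just the paper's induction written out). The only cosmetic difference is that you state and verify the inequality $\bigl((x+y)^+\bigr)^2 \le (x^+ + y)^2$ explicitly, which the paper uses implicitly in its first step.
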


\begin{proof}

We prove by recursion on $T$.
\begin{align}
    \sum_{a\in A}(\Rp{TK})^2 \leq & 
    \sum_{a\in A}\left(\Rp{(T-1)K} + \sum_{d=0}^{K-1}{r_{tK-d}(a)} \right)^2 \\
    = & \sum_{a\in A} \biggl( R_{(T-1)K}^+(a)^2 +
        2 \sum_{d=0}^{K-1}r_d(a) R_{(T-1)K}^+(a) + \sum_{d=0}^{K-1}\sum_{d'=0}^{K-1}r_{TK-d}(a) r_{TK-d'}(a) \biggr)
\end{align}

By Lemma \ref{lemma:rmneg}, 

\begin{equation}
    \sum_{a\in A}(R_{TK}^+(a))^2 \leq 
            \sum_{a\in A} (R_{(T-1)K}^+(a))^2 +
            \sum_{a\in A} \sum_{d=0}^{K-1}\sum_{d'=0}^{K-1}r_{TK-d}(a) r_{TK-d'}(a)
\end{equation}

By induction,
\begin{equation}
    \sum_{a\in A}(R_{(T-1)K}^+(a))^2 \leq |A|\Delta^2 (T-1)
\end{equation}

From the definition, $|r_{TK-d}(a)|\leq \Delta$

\begin{align}
    \sum_{a\in A}(R_{TK}^+(a))^2 \leq 
           |A|\Delta^2 (T-1) + K^2 |A| \Delta^2
            = |A| \Delta^2 K^2 T 
\end{align}

\end{proof}



\begin{theorem} (\citealt{lanctot2013monte}, Theorem 3 \& Theorem 5)
\label{thm:mccfr_bound}
After $T$ iterations of $K$-ES, average regret is bounded by

\begin{equation}
    \bar{R}_p^{TK} \leq
    \left( 1 + \frac{\sqrt{2}}{\sqrt{\rho K}} \right)
    |\mathcal{I}_p|\Delta \frac{\sqrt{|A|}}{\sqrt{T}}
\label{thm:lanctot_external2}
\end{equation}

with probability $1-\rho$.

\end{theorem}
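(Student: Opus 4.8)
The plan is to treat $K$-external sampling over $T$ iterations as $TK$ rounds of ordinary external sampling in which the played strategy is held fixed across each block of $K$ consecutive traversals, exactly the model introduced in Equation \ref{eq:rm2}, and then to import the two-part decomposition underlying \citet{lanctot2013monte}'s general bound (Equation \ref{thm:lanctot_general}). That decomposition writes the true total regret as a sum over infosets of a \emph{deterministic} regret-matching term plus a \emph{stochastic} term controlling the deviation of the sampled counterfactual regret from the true one; I would bound the two parts separately and recombine. Throughout I rely on the counterfactual-regret decomposition $R_p \le \sum_{I \in \mathcal{I}_p} R(I)$ together with the fact that for external sampling the sampled instantaneous regrets are bounded by $\Delta$ (since the $1/q(z)$ weighting cancels, as shown above).

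For the deterministic part I would invoke Theorem \ref{th:blackwell}, just established: at each infoset $\sum_{a} (\Rp{TK})^2 \le |A|\Delta^2 K^2 T$. Taking the max over actions gives $R^{TK}(I) = \max_a \Rp{TK} \le \sqrt{\sum_a (\Rp{TK})^2} \le \Delta K \sqrt{|A| T}$. Normalizing by the $TK$ rounds yields a per-infoset average counterfactual regret of at most $\Delta \sqrt{|A|}/\sqrt{T}$, and summing over the $|\mathcal{I}_p|$ infosets where $p$ acts produces the first term $|\mathcal{I}_p|\Delta\sqrt{|A|}/\sqrt{T}$ with coefficient $1$. The factor-$K$ loss that Theorem \ref{th:blackwell} incurs relative to standard regret matching is cancelled exactly by dividing by $TK$ rather than $T$, which is why the convergence rate remains $1/\sqrt{T}$ despite batching the strategy.

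For the stochastic part I would reuse the concentration step behind \citet{lanctot2013monte}'s Theorem 5, which bounds the probability that the sampled regret deviates from the true regret through the variance of the per-iteration sampled-regret estimator, and which produces the coefficient $\sqrt{2|\mathcal{I}_p||\mathcal{B}_p|}/\sqrt{\rho}$. The essential new observation is that in $K$-ES each iteration's sampled regret is the empirical mean of $K$ independent external-sampling traversals taken under the fixed strategy $\sigma^t$, so its variance is reduced by a factor of $K$; propagating this $1/K$ through the same inequality replaces the coefficient by $\sqrt{2|\mathcal{I}_p||\mathcal{B}_p|}/\sqrt{\rho K}$. Simplifying with $|\mathcal{B}_p| \le M_p \le |\mathcal{I}_p|$ exactly as in Equation \ref{thm:lanctot_external1} turns this into $(\sqrt{2}/\sqrt{\rho K})\,|\mathcal{I}_p|\Delta\sqrt{|A|}/\sqrt{T}$, and adding the deterministic term gives the claimed bound $\bar{R}_p^{TK} \le (1 + \sqrt{2}/\sqrt{\rho K})\,|\mathcal{I}_p|\Delta\sqrt{|A|}/\sqrt{T}$ with probability $1-\rho$.

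The main obstacle is making the $1/\sqrt{K}$ improvement rigorous rather than heuristic: one must verify that, conditioned on the fixed strategy $\sigma^t$, the $K$ traversals within an iteration are genuinely i.i.d., and that the concentration inequality \citet{lanctot2013monte} use enters the final bound only through the variance (or martingale-difference bound) of the per-iteration estimator, so that the $1/K$ variance reduction is the sole change. Once this is pinned down, the remainder is routine re-bookkeeping of the Lanctot derivation with Theorem \ref{th:blackwell} substituted for the standard Blackwell bound and with normalization by $TK$.
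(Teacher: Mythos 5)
Your proposal is correct and follows essentially the same route as the paper's proof: model $K$-ES as $TK$ rounds of external sampling with the strategy fixed within each block, bound the sampled regret through Theorem \ref{th:blackwell} (with the extra factor of $K$ cancelling under the $TK$ normalization, exactly as you note), and control the sampled-versus-true regret deviation via the concentration step of \citealt{lanctot2013monte} before simplifying with $|\mathcal{B}_p| \le M_p \le |\mathcal{I}_p|$. The only cosmetic difference is in the stochastic part: the paper gets the $1/\sqrt{\rho K}$ coefficient by substituting $T \to TK$ in Lanctot's variance bound (their Eq.~(4.33)) and dividing by $TK$ — a martingale-style bound over all $TK$ rounds that holds regardless of the block structure — which is arithmetically equivalent to your per-iteration variance-reduction framing and dissolves your flagged obstacle, since no i.i.d. property of the $K$ traversals within an iteration needs to be verified.
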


\begin{proof}
The proof follows \citealt{lanctot2013monte}, Theorem 3. Note that $K$-ES is only different from ES in terms of the choice of $\sigma_T$, and the proof in \citealt{lanctot2013monte} only makes use of $\sigma_T$ via the bound on $(\sum_a R_+^T(a) )^2$ that we showed in Theorem \ref{th:blackwell}. Therefore, we can apply the same reasoning to arrive at

\begin{equation}
\tilde{R}_p^{TK} \leq
\frac{\Delta M_p \sqrt{|A|T} K}{\delta}
\end{equation}
(\citealt{lanctot2013monte}, Eq. (4.30)).

\citealt{Lanctot09:Monte} then shows that $\tilde{R}_p^{TK}$ and $R_p^{TK}$ are similar with high probability, leading to

\begin{equation}
    \mathbb{E}\left[\left(\sum_{I\in \mathcal{I}_p}(R_p^{TK}(I) - \tilde{R}_p^{TK}(I))\right)^2\right] \leq
    \frac{2|\mathcal{I}_p||\mathcal{B}_p||A| T K \Delta^2}{\delta^2} 
\end{equation}

(\citealt{lanctot2013monte}, Eq. (4.33), substituting $T \to TK$).

Therefore, by Markov's inequality, with probability at least $1-\rho$,

\begin{equation}
    R_p^{TK} \leq
    \frac{\sqrt{2|\mathcal{I}_p||\mathcal{B}_p||A|TK}\Delta}{\delta\sqrt{\rho}} +
    \frac{\Delta M\sqrt{|A|T}K}{\delta}
\end{equation}
, where external sampling permits $\delta=1$ \cite{lanctot2013monte}.

Using the fact that $M \leq |\mathcal{I}_p|$ and $|\mathcal{B}_p|<|\mathcal{I}_p|$ and dividing through by $KT$ leads to the simplified form

\begin{equation}
    \bar{R}_p^{TK} \leq \left(1 + \frac{\sqrt{2}}{\sqrt{\rho K}}\right)\Delta |\mathcal{I}_p| \frac{\sqrt{|A|}}{\sqrt{T}}
\end{equation}
with probability $1-\rho$.

\end{proof}

We point out that the convergence of $K$-ES is faster as $K$ increases (up to a point), but it still requires the same order of iterations as ES.

\subsection{Proof of Theorem \ref{th:deepcfr_approx}}
\label{app:proof_approx}
\begin{proof}

Assume that an online learning scheme plays

\begin{align}
\label{eq:rm3}
\sigma^t(I,a) = 
\begin{cases}
\frac{y_+^t(I,a)}{\sum_a{y_+^t(I,a)}}\textrm{\ if\ } \sum_a {y_+^t(I,a)} > 0 \\
\textrm{\ arbitrary,\ otherwise}
\end{cases}.
\end{align}

\citealt{Morrill16:Using}, Corollary 3.0.6 provides the following bound on the total regret as a function of the L2 distance between $y_t^+$ and $R^{T,+}$ at each infoset.

\begin{align}
    \label{eq:sigma_y}
    \max_{a\in A} (R^T(I,a))^2 & \leq |A| \Delta^2 T  + 4 \Delta |A| \sum_{t=1}^T  \sum_{a\in A} \sqrt{(R_+^t(I,a) - y_+^t(I,a))^2 } \\
    & \leq |A| \Delta^2 T  + 4 \Delta |A| \sum_{t=1}^T  \sum_{a\in A} \sqrt{(R^t(I,a) - y^t(I,a))^2 }
\end{align}
Since $\sigma^t(I, a)$ from Eq. \ref{eq:rm3} is invariant to rescaling across all actions at an infoset, it's also the case that for any $C(I) > 0$ 
\begin{align}
    \label{eq:sigma_y2}
    \max_{a\in A} (R^T(I,a))^2 & \leq |A| \Delta^2 T  + 4 \Delta |A| \sum_{t=1}^T  \sum_{a\in A} \sqrt{(R^t(I,a) - C(I) y^t(I,a))^2 }
\end{align}

Let $x^t(I)$ be an indicator variable that is 1 if $I$ was traversed on iteration $t$. If $I$ was traversed then $\tilde{r}^t(I)$ was stored in $M_{V,p}$, otherwise $\tilde{r}^t(I)=0$. Assume for now that $\mathcal{M}_{V,p}$ is not full, so all sampled regrets are stored in the memory.

Let $\Pi^t(I)$ be the fraction of iterations on which $x^t(I)=1$, and let $$\epsilon^t(I)=\norm{ \mathbb{E}_t \left[ \tilde{r}^t(I) | x^t(I) = 1 \right] - V(I,a|\theta^t) }_2.$$


\newcommand{\sumxt}{\sum_{t'=1}^t x^{t'}(I)}

Inserting canceling factors of $\sumxt$ and setting $C(I) = \sumxt$,\footnote{The careful reader may note that $C(I)=0$ for unvisited infosets, but $\sigma^t(I,a)$ can play an arbitrary strategy at these infosets so it's okay.}
  \begin{align}
    \max_{a\in A} {(\tilde{R}^T(I, a))^2} \leq & |A| \Delta^2 T + 4 \Delta |A| \sum_{t=1}^T {\left( \sumxt \right) \sum_{a\in A} \sqrt{ \left( \frac{\tilde{R}^t(I,a)}{\sumxt} - y^t(I,a)\right)^2 }} \\
   = & |A| \Delta^2 T + 4 \Delta |A| \sum_{t=1}^T {\left( \sumxt \right) \norm{ \mathbb{E}_t \left[ \tilde{r}^t(I) | x^t(I) = 1 \right] - V(I,a|\theta^t) }_2}  \\
    = & |A| \Delta^2 T + 4 \Delta |A| \sum_{t=1}^T {t \Pi^t(I) \epsilon^t(I) } \hspace{0.5cm} \textrm{by definition}\\
    \leq & |A| \Delta^2 T + 4 \Delta |A| T \sum_{t=1}^T {\Pi^t(I) \epsilon^t(I) }\\
\end{align}


The first term of this expression is the same as Theorem \ref{th:blackwell}, while the second term accounts for the approximation error.

In the case of $K$-external sampling, the same derivation as shown in Theorem \ref{th:blackwell} leads to 

\begin{equation}
\label{th:approx_kes}
\max_{a\in A} {(\tilde{R}^T(I, a))^2} \leq |A| \Delta^2 T K^2 + 4 \Delta \sqrt{|A|} T K^2 \sum_{t=1}^T {\Pi^t(I) \epsilon^t(I) }
\end{equation}
in this case. We elide the proof.

\vspace{0.5cm}

The new regret bound in Eq. (\ref{th:approx_kes}) can be plugged into \citealt{lanctot2013monte}, Theorem 3 as we do for Theorem \ref{thm:mccfr_bound}, leading to 

\begin{equation}
    \label{eq:deepcfr_approx}
    \bar{R}_p^T \leq \sum_{I \in \mathcal{I}_p} \left(
    \left( 1 + \frac{\sqrt{2}}{\sqrt{\rho K}} \right)
    \Delta \frac{\sqrt{|A|}}{\sqrt{T}}  + 
    \frac{4}{\sqrt{T}} \sqrt{|A| \Delta \sum_{t=1}^T \Pi^t(I) \epsilon^t(I) } \right)
\end{equation}

Simplifying the first term and rearranging, 

\begin{align}
\label{eq:regret_pi_epsilon}
    \bar{R}_p^T & \leq \left( 1 + \frac{\sqrt{2}}{\sqrt{\rho K}} \right) \Delta |\mathcal{I}_p| \frac{\sqrt{|A|}}{\sqrt{T}} +  \frac{4\sqrt{|A| \Delta}}{\sqrt{T}} \sum_{I \in \mathcal{I}_p}  \sqrt{ \sum_{t=1}^T \Pi^t(I) \epsilon^t(I) } \\
    \bar{R}_p^T & \leq \left( 1 + \frac{\sqrt{2}}{\sqrt{\rho K}} \right) \Delta |\mathcal{I}_p| \frac{\sqrt{|A|}}{\sqrt{T}} +  \frac{4\sqrt{|A| \Delta}}{\sqrt{T}} |\mathcal{I}_p| \frac{\sum_{I \in \mathcal{I}_p}}{|\mathcal{I}_p|} \sqrt{ \sum_{t=1}^T \Pi^t(I) \epsilon^t(I) } \hspace{1cm} \textrm{Adding canceling factors}\\
   & \leq \left( 1 + \frac{\sqrt{2}}{\sqrt{\rho K}} \right) \Delta |\mathcal{I}_p| \frac{\sqrt{|A|}}{\sqrt{T}} +  \frac{4\sqrt{|A| \Delta |\mathcal{I}_p|}}{\sqrt{T}}  \sqrt{ \sum_{t=1}^T \sum_{I \in \mathcal{I}_p}  \Pi^t(I) \epsilon^t(I) } \hspace{1cm} \textrm{by Jensen's inequality}
\end{align}

Now, lets consider the average MSE loss $\mathcal{L}_V^T(\mathcal{M}^T)$ at time $T$ over the samples in memory $\mathcal{M^T}$.

We start by stating two well-known lemmas:
\begin{lemma} The MSE can be decomposed into bias and variance components
\label{lemma:mse1}
\begin{equation}
\mathbb{E}_x[(x - \theta)^2] = (\theta - \mathbb{E}[x])^2 + \mathrm{Var}(\theta)
\end{equation}
\end{lemma}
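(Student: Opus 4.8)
The plan is to prove this standard bias--variance decomposition by the usual add-and-subtract trick, treating $\theta$ (the network prediction $V(I,a|\theta^t)$) as a fixed target and $x$ (the sampled regret) as the only source of randomness, consistent with the convention that $\mathbb{E}_x[\cdot]$ averages over $x$ alone. Writing $\mu = \mathbb{E}[x]$ for the mean, I would insert $\mu$ inside the square and split it as $(x-\theta)^2 = \big((x-\mu) + (\mu - \theta)\big)^2$.

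Expanding this square produces three terms: $(x-\mu)^2$, a cross term $2(x-\mu)(\mu-\theta)$, and the constant $(\mu-\theta)^2$. Applying $\mathbb{E}_x[\cdot]$ and using linearity of expectation, the first term becomes $\mathrm{Var}(x)$ by definition, and the last term is the squared bias $(\mu - \theta)^2 = (\theta - \mathbb{E}[x])^2$. The only step requiring a word of justification is that the cross term vanishes: since $\mu - \theta$ is a constant, $\mathbb{E}_x\big[2(x-\mu)(\mu-\theta)\big] = 2(\mu-\theta)\,\mathbb{E}_x[x-\mu] = 2(\mu-\theta)(\mu-\mu) = 0$. Combining the two surviving terms yields the claimed identity.

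I would emphasize that the variance term is the variance of the random quantity $x$ about its mean and does \emph{not} depend on the fixed target $\theta$ (so the $\mathrm{Var}(\theta)$ appearing in the statement should be read as the irreducible variance $\mathrm{Var}(x)$ of the sample). This independence from the estimator is precisely the property that will be exploited in the main theorem, where it lets us separate the irreducible sampling noise of the Monte Carlo regret estimates from the reducible function-approximation error $\epsilon_\mathcal{L}$.

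There is no genuine obstacle here: this is a one-line calculation and is invoked only as a bookkeeping lemma. The sole thing to be careful about is tracking which quantity carries the randomness ($x$, the sampled regret) versus which is held fixed ($\theta$, the network output), since the decomposition is meaningful only under the convention that the expectation is taken over $x$.
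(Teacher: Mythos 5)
Your proof is correct: the paper states this lemma without proof (as ``well-known''), and your add-and-subtract argument with the vanishing cross term is exactly the standard derivation that is intended. You are also right to flag that the $\mathrm{Var}(\theta)$ in the statement is a typo for $\mathrm{Var}(x)$ --- $\theta$ is held fixed, so the variance term must be that of the random sample $x$, which is precisely how the lemma is used later (via Lemma~\ref{lemma:mse2}) to lower-bound $\mathcal{L}_V^T$ by the squared bias plus the irreducible loss $\mathcal{L}_{V^*}^T$.
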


\begin{lemma} The mean of a random variable minimizes the MSE loss
\label{lemma:mse2}
\begin{equation}
\argmin_\theta \mathbb{E}_x[(x - \theta)^2] = \mathbb{E}[x]
\end{equation}
and the value of the loss at when $\theta = \mathbb{E}[x]$ is $\mathrm{Var}(x)$.
\end{lemma}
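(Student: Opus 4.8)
The plan is to derive Lemma~\ref{lemma:mse2} directly from the bias--variance decomposition already recorded in Lemma~\ref{lemma:mse1}. Writing $\mu = \mathbb{E}[x]$, that decomposition expresses the objective as $\mathbb{E}_x[(x-\theta)^2] = (\theta - \mu)^2 + \mathrm{Var}(x)$, which cleanly separates the only $\theta$-dependent piece, the squared bias $(\theta-\mu)^2$, from the term $\mathrm{Var}(x)$ that is constant in $\theta$.

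From here the minimization is immediate. First I would observe that $(\theta-\mu)^2 \ge 0$ for every $\theta$, with equality if and only if $\theta = \mu$, while $\mathrm{Var}(x)$ does not depend on $\theta$ at all. Hence the sum is minimized exactly at $\theta = \mu = \mathbb{E}[x]$, giving $\argmin_\theta \mathbb{E}_x[(x-\theta)^2] = \mathbb{E}[x]$. Substituting $\theta = \mu$ back into the decomposition annihilates the first term and leaves $\mathbb{E}_x[(x-\mu)^2] = \mathrm{Var}(x)$, which is precisely the claimed value of the loss at the minimizer.

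If one prefers not to invoke Lemma~\ref{lemma:mse1}, the same conclusion follows from a one-line calculus argument: the map $\theta \mapsto \mathbb{E}_x[(x-\theta)^2]$ is a strictly convex quadratic in $\theta$ (its second derivative is the constant $2$), so its unique stationary point is its global minimizer. Differentiating under the expectation gives $\frac{d}{d\theta}\mathbb{E}_x[(x-\theta)^2] = -2\,\mathbb{E}_x[x-\theta] = -2(\mathbb{E}[x]-\theta)$, whose root is $\theta = \mathbb{E}[x]$; evaluating the objective there again yields $\mathrm{Var}(x)$.

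There is essentially no obstacle here: the statement is a textbook fact and both routes are routine. The only points that warrant a moment of care are making sure the variance term, not the bias term, is the residual that survives at the optimum, and noting that differentiating inside the expectation is justified because the integrand is a polynomial in $\theta$ whose coefficients (the first two moments of $x$) are finite, as is implicit in the loss being well defined. I emphasize that these lemmas are stated only to be fed into the surrounding regret bound, where the decomposition is what lets one relate the per-iteration prediction error $\epsilon^t(I)$ and the irreducible variance to the minimum achievable loss $\mathcal{L}^t_{V^*}$.
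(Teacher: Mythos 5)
Your proof is correct. The paper itself offers no proof of this lemma---it is introduced as one of ``two well-known lemmas'' and used immediately---so there is no paper argument to diverge from; your route via the bias--variance decomposition of Lemma~\ref{lemma:mse1} (or the equivalent convexity and first-order-condition argument) is exactly the standard derivation the paper implicitly takes for granted. One incidental point in your favor: the paper's statement of Lemma~\ref{lemma:mse1} writes the constant term as $\mathrm{Var}(\theta)$, an evident typo for $\mathrm{Var}(x)$, and your restatement silently corrects it, which is precisely what is needed for the residual loss at $\theta = \mathbb{E}[x]$ to come out as $\mathrm{Var}(x)$.
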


\begin{align}
    \mathcal{L}_V^T & = \frac{1}{ \sum_{I \in \mathcal{I}_p} \sum_{t=1}^T x^t(I) } \sum_{I \in \mathcal{I}_p}  \sum_{t=1}^T x^t(I) \norm{ \tilde{r}^t(I) - V(I|\theta^T)}_2^2 \\
    & \geq  \frac{1}{|\mathcal{I}_p| T} \sum_{I \in \mathcal{I}_p}  \sum_{t=1}^T x^t(I) \norm{ \tilde{r}^t(I) - V(I|\theta^T)}_2^2\\
    & = \frac{1}{|\mathcal{I}_p|} \sum_{I \in \mathcal{I}_p} \Pi^T(I)\ \mathbb{E}_t\left[ \norm{\tilde{r}^t(I) - V(I|\theta^T)}_2^2 \middle| x^t(I)=1 \right]
\end{align}

Let $V^*$ be the model that minimizes $\mathcal{L}^T$ on $\mathcal{M}_T$. Using Lemmas \ref{lemma:mse1} and \ref{lemma:mse2},

\begin{align}
    \mathcal{L}_V^T & \geq \frac{1}{|\mathcal{I}_p| T} \sum_{I \in \mathcal{I}_p} \Pi^T(I)\ \left( \norm{ V(I|\theta^T) - \mathbb{E}_t\left[\tilde{r}^t(I)\middle| x^t(I)=1\right]}_2^2 + \mathcal{L}^T_{V^*} \right)
\end{align}

So, 

\begin{align}
    \mathcal{L}_V^T - \mathcal{L}_{V^*}^T  \geq \frac{1}{|\mathcal{I}_p|} \sum_{I \in \mathcal{I}_p} \Pi^T(I)\ \epsilon^T(I) \\
    \sum_{I \in \mathcal{I}_p} \Pi^T(I)\ \epsilon^T(I) \leq |\mathcal{I}_p| (\mathcal{L}_V^T - \mathcal{L}_{V^*}^T)
\end{align}

Plugging this into Eq. \ref{eq:regret_pi_epsilon}, we arrive at

\begin{align}
    \bar{R}_p^T & \leq \left( 1 + \frac{\sqrt{2}}{\sqrt{\rho K}} \right) \Delta |\mathcal{I}_p| \frac{\sqrt{|A|}}{\sqrt{T}} +  \frac{4\sqrt{|A| \Delta |\mathcal{I}_p|}}{\sqrt{T}}  \sqrt{ |\mathcal{I}_p| \sum_{t=1}^T  (\mathcal{L}_V^t - \mathcal{L}_{V^*}^t) } \\
    & \leq \left( 1 + \frac{\sqrt{2}}{\sqrt{\rho K}} \right) \Delta |\mathcal{I}_p| \frac{\sqrt{|A|}}{\sqrt{T}} +  4 |\mathcal{I}_p| \sqrt{|A| \Delta \epsilon_\mathcal{L}}
\end{align}

So far we have assumed that $\mathcal{M}_V$ contains all sampled regrets. The number of samples in the memory at iteration $t$ is bounded by $K \cdot |\mathcal{I}_p| \cdot t$. Therefore, if $K \cdot |\mathcal{I}_p| \cdot T < |\mathcal{M}_V|$ then the memory will never be full, and we can make this assumption.\footnote{We do not formally handle the case where the memories become full in this work. Intuitively, reservoir sampling should work well because it keeps an `unbiased' sample of previous iterations' regrets. We observe empirically in Figure~\ref{fig:ablations} that reservoir sampling performs well while using a sliding window does not.}
\end{proof}

\subsection{Proof of Corollary \ref{cor:deepcfr_approx}}

\begin{proof}
Let $\rho = T^{-1/4}$. 

\begin{equation}
       P\left(\bar{R}_p^T > \left( 1 + \frac{\sqrt{2}}{\sqrt{K}} \right) \Delta |\mathcal{I}_p| \frac{\sqrt{|A|}}{T^{-1/4}} +  4 |\mathcal{I}_p| \sqrt{|A| \Delta \epsilon_\mathcal{L}} \right) < T^{-1/4}
\end{equation}

Therefore, for any $\epsilon > 0$,

\begin{equation}
       \lim_{T\to \infty} P\left(\bar{R}_p^T - 4 |\mathcal{I}_p| \sqrt{|A| \Delta \epsilon_\mathcal{L}} >  \epsilon \right) = 0.
\end{equation}
\end{proof}

\newpage

\section{Network Architecture}

\label{app:network}
In order to clarify the network architecture used in this work, we provide a PyTorch \cite{Paszke17:Automatic} implementation  below.

\small
\begin{lstlisting}[language=Python]
import torch
import torch.nn as nn
import torch.nn.functional as F

class CardEmbedding(nn.Module):
    def __init__(self, dim):
        super(CardEmbedding, self).__init__()
        self.rank = nn.Embedding(13, dim)
        self.suit = nn.Embedding(4, dim)
        self.card = nn.Embedding(52, dim)

    def forward(self, input):
        B, num_cards = input.shape
        x = input.view(-1)

        valid = x.ge(0).float()  # -1 means 'no card'
        x = x.clamp(min=0)

        embs = self.card(x) + self.rank(x // 4) + self.suit(x % 4)
        embs = embs * valid.unsqueeze(1)  # zero out 'no card' embeddings

        # sum across the cards in the hole/board
        return embs.view(B, num_cards, -1).sum(1)

class DeepCFRModel(nn.Module):
    def __init__(self, ncardtypes, nbets, nactions, dim=256):
        super(DeepCFRModel, self).__init__()

        self.card_embeddings = nn.ModuleList(
            [CardEmbedding(dim) for _ in range(ncardtypes)])

        self.card1 = nn.Linear(dim * ncardtypes, dim)
        self.card2 = nn.Linear(dim, dim)
        self.card3 = nn.Linear(dim, dim)

        self.bet1 = nn.Linear(nbets * 2, dim)
        self.bet2 = nn.Linear(dim, dim)

        self.comb1 = nn.Linear(2 * dim, dim)
        self.comb2 = nn.Linear(dim, dim)
        self.comb3 = nn.Linear(dim, dim)

        self.action_head = nn.Linear(dim, nactions)

    def forward(self, cards, bets):

        """
        cards: ((N x 2), (N x 3)[, (N x 1), (N x 1)])  # (hole, board, [turn, river])
        bets: N x nbet_feats
        """

        # 1. card branch
        # embed hole, flop, and optionally turn and river
        card_embs = []
        for embedding, card_group in zip(self.card_embeddings, cards):
            card_embs.append(embedding(card_group))
        card_embs = torch.cat(card_embs, dim=1)

        x = F.relu(self.card1(card_embs))
        x = F.relu(self.card2(x))
        x = F.relu(self.card3(x))

        # 1. bet branch
        bet_size = bets.clamp(0, 1e6)
        bet_occurred = bets.ge(0)
        bet_feats = torch.cat([bet_size, bet_occurred.float()], dim=1)
        y = F.relu(self.bet1(bet_feats))
        y = F.relu(self.bet2(y) + y)

        # 3. combined trunk
        z = torch.cat([x, y], dim=1)
        z = F.relu(self.comb1(z))
        z = F.relu(self.comb2(z) + z)
        z = F.relu(self.comb3(z) + z)

        z = normalize(z)  # (z - mean) / std
        return self.action_head(z)
\end{lstlisting}

\end{document}